\newtheorem{theorem}{Theorem}
\newtheorem{corollary}{Corollary}[theorem]
\theoremstyle{definition}
\newtheorem{definition}{Definition}[section]
\newcommand{\Z}{\mathbb{Z}}
\newcommand{\N}{\mathbb{N}}
\newcommand{\R}{\mathbb{R}}
\newcommand{\Real}{\mathbb{R}}
\DeclareMathOperator\supp{supp}
\DeclareMathOperator\WF{WF}
\definecolor{tabred}{RGB}{214,39,40}
\definecolor{taborange}{RGB}{255, 127, 14}
\definecolor{tabgreen}{RGB}{44, 160, 44}
\definecolor{tabblue}{RGB}{31,119,180}
\definecolor{wrongultramarine}{rgb}{0.07, 0.04, 0.56}
\newcounter{RonCounter}
\newcounter{StefanCounter}
\newcounter{RWCounter}
\newcounter{HectorCounter}
\crefname{section}{Sec.}{Secs.}
\Crefname{section}{Section}{Sections}
\Crefname{table}{Table}{Tables}
\crefname{table}{Tab.}{Tabs.}
\begin{document}

\title{Explaining Image Classifiers with Multiscale Directional Image Representation}

\author{Stefan Kolek$^{1}$, Robert Windesheim$^{1}$, Hector Andrade-Loarca$^{1}$, Gitta Kutyniok$^{1,2}$, Ron Levie$^{3}$\\
$^{1}$Ludwig-Maximilians-Universität München, Department of Mathematics\\
$^{2}$University of Tromsø, Department of Physics and Technology\\
$^{3}$Technion-Israel Institute of Technology, Department of Mathematics\\
{\tt\small \{kolek$,$windesheim$,$andrade$,$kutyniok\}@math.lmu.de}, {\tt\small levieron@technion.ac.il}
}


\maketitle

\begin{abstract}

Image classifiers are known to be difficult to interpret and therefore require explanation methods to understand their decisions. We present ShearletX, a novel mask explanation method for image classifiers based on the shearlet transform -- a multiscale directional image representation. Current mask explanation methods are regularized by smoothness constraints that protect against undesirable fine-grained explanation artifacts. However, the smoothness of a mask limits its ability to separate fine-detail patterns, that are relevant for the classifier, from nearby nuisance patterns, that do not affect the classifier. ShearletX solves this problem by avoiding smoothness regularization all together, replacing it by shearlet sparsity constraints. The resulting explanations consist of a few edges, textures, and smooth parts of the original image, that are the most relevant for the decision of the classifier. To support our method, we propose a mathematical definition for explanation artifacts and an information theoretic score to evaluate the quality of mask explanations.  We demonstrate the superiority of ShearletX over previous mask based explanation methods using these new metrics, and present exemplary  situations where separating fine-detail patterns allows explaining phenomena that were not explainable before.
\end{abstract}

\section{Introduction}\label{sec:intro}
\begin{figure}[t]
      \centering
  \begin{minipage}{1.0\linewidth}
  \centering
    \includegraphics[width=1.0\linewidth]{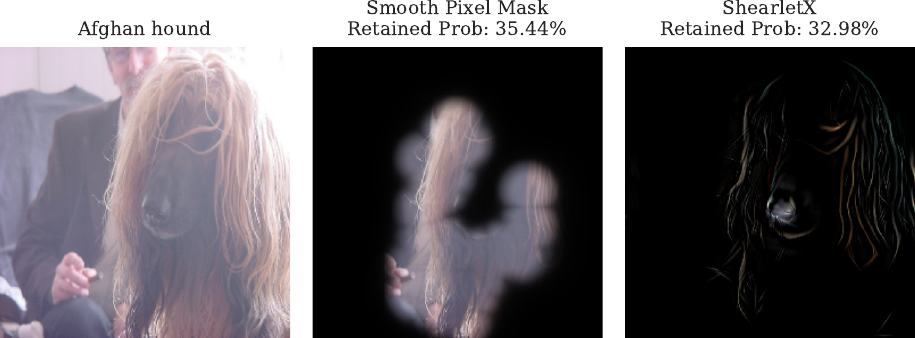}
    \vspace{-.3cm}
    \end{minipage}
  \begin{minipage}{1.0\linewidth}
  \centering
    \includegraphics[width=1.0\linewidth]{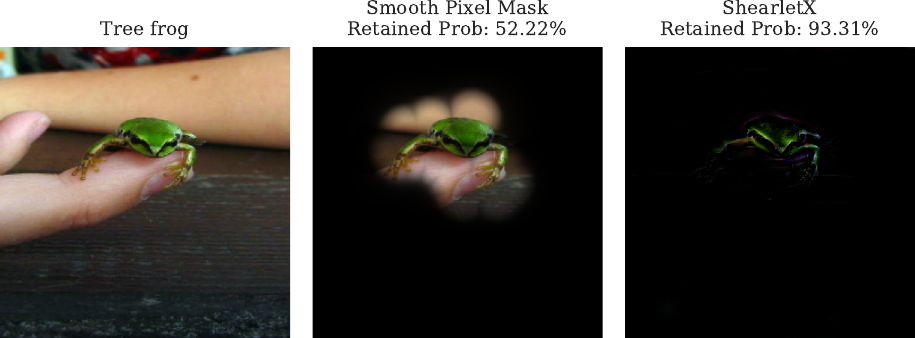}
    \vspace{-.3cm}
    \end{minipage}
  \begin{minipage}{1.0\linewidth}
  \centering
    \includegraphics[width=1.0\linewidth]{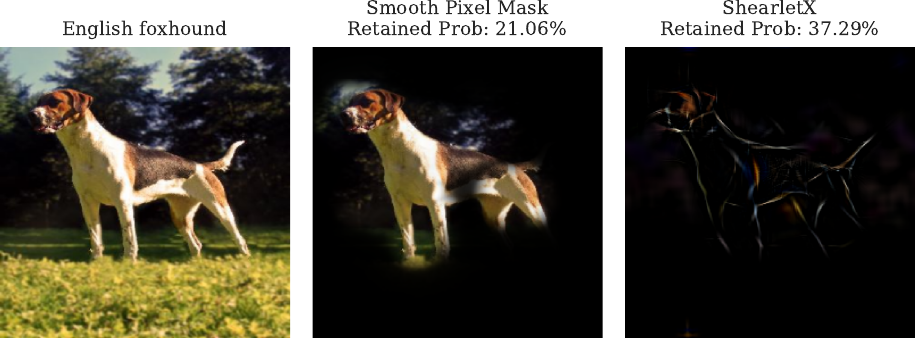}
    \end{minipage}

    \caption{Left column: ImageNet samples with prediction. Middle column: Smooth pixel mask explanation from Fong et al. \cite{Fong_2019_ICCV}. Right column: ShearletX (ours). Retained probability is computed as class probability after masking divided by class probability before masking. 
    ShearletX is the first mask explanation method that can separate fine-detail patterns, that are relevant for the classifier, from nearby patterns that are irrelevant, without producing artifacts.
    }\label{fig:teaser}
    \vspace{-.5cm}
\end{figure}

Modern image classifiers are known to be difficult to explain. Saliency maps comprise a well-established explainability tool that highlights important image regions for the classifier and helps interpret classification decisions. An important saliency approach  frames saliency map computation as an optimization problem over masks \cite{RDE_original_2019, in_distribution_cosmas_2020, kolek2022cartoon,  chang_GAN_explanation_2018, Fong_2019_ICCV, Fong2017InterpretableEO,Dabkowski2017}. The explanation mask is optimized to keep only parts of the image that suffice to retain the classification decision. However, Fong and Vedaldi \cite{Fong2017InterpretableEO} showed that an unregularized explanation mask is very susceptible to explanation artifacts and  is hence unreliable. Therefore, current practice \cite{Fong2017InterpretableEO,Fong_2019_ICCV,chang_GAN_explanation_2018} heavily regularizes the explanation masks to be smooth. The smooth explanation masks can communicate  useful explanatory information by roughly localizing the relevant  image region. However, the pattern that is relevant for the classifier is often overlaid on  patterns that do not affect the classifier. In such a situation the mask cannot effectively separate the relevant pattern from the nuisance pattern, due to the smoothness constraints. As a result, many details that are irrelevant to the classifier, such as background elements, textures, and other spatially localized patterns, appear in the explanation.

An ideal mask explanation method should be resistant to explanation artifacts and capable of highlighting only relevant patterns.
We present such a method, called \emph{ShearletX}, that is able to separate different patterns that occupy nearby spatial locations by optimizing a mask in the shearlet representation  of an image \cite{kutyniok2012shearlets}. Due to the ability of shearlets to efficiently encode directional features in images, we can separate relevant fine-grained image parts, like edges, smooth areas, and textures, extremely well. We show both theoretically and experimentally that defining the mask in the shearlet domain circumvents explanation artifacts.  The masked image is optimized so that the classifier retains its prediction as much as possible and to have small spatial support (but not high spatial smoothness), while regularizing the mask to be sparse in the shearlet domain. This regularization assures that ShearletX retains only relevant parts, a fact that we support by a new information theoretic score for the quality of mask explanations. 
Figure \ref{fig:teaser} gives examples demonstrating that ShearletX can separate relevant details from nuisance patterns, which smooth pixel masks cannot. Our contributions are summarized as follows:
\begin{enumerate}
    \item ShearletX: The first mask explanation method that can effectively
     separate fine-detail patterns, that are relevant for the classifier, from nearby nuisance patterns, that do not affect the classifier. 
     \vspace{-.2cm}
     \item Artifact Analysis: Our explanation method is based on low-level vision for maximal  interpretability and  belongs to the family of methods that produce out-of-distribution explanations. To validate that the resulting out-of-distribution explanations are meaningful, we develop a theory to analyze and quantify explanation artifacts, and prove that ShearletX is resilient to such artifacts.  
    \vspace{-.2cm}
    \item Hallucination Score:  a new metric for mask explanations that quantifies explanation artifacts by measuring the amount of  edges in the explanation that do not appear in the original image.
    \vspace{-.2cm}
    \item Concisesness-Preciseness Score: A new information theoretic metric for mask explanations that  gives a high score for explanations that extract the least amount of information from the image to retain the classification decision as accurately as possible.
   \vspace{-.2cm}
    \item Experimental Results: We demonstrate that ShearletX performs better than previous mask explanations using our new metrics and give examples where ShearletX allows to explain phenomena that were not explainable with previous saliency methods.
\end{enumerate}
The source code for the experiments is publicly available \footnote{\url{https://github.com/skmda37/ShearletX}}.

\section{Related Work}\label{sec:related work}
The explainability field has experienced a surge in research activity over the last decade, due to the societal need to explain machine learning models. We focus on explainability aspects of image classifiers, where saliency maps provide an important and useful way of understanding a classifier's prediction. The community has also introduced  other tools, such as concept-based methods \cite{Kim2018InterpretabilityBF} and inherently interpretable architectures \cite{rudin2019looks}, but we will not focus on these in our work. In the following, we review previously introduced saliency map methods. 

\subsubsection*{Pixel Attribution Methods}
Many saliency map methods assign a relevance score to each pixel indicating its relevance for the prediction. Such methods include Gradient Saliency \cite{Simonyan2014DeepIC}, Grad-CAM\cite{Selvaraju2019GradCAMVE}, LRP \cite{Layerwise_relevance_prop2015}, Guided Backprop \cite{guided_backprop_2015}, and Integrated Gradients \cite{Integrated_gradient_2017_sundararajan}. Although these methods can help explain classifiers, they are heuristic in their approach and not  optimized for a well-defined notion of relevance. Therefore, the fidelity of pixel attribution methods needs to be checked post-hoc with metrics such as the area over the perturbation curve \cite{Arras2017ExplainingRN} and can be low. Moreover, Kindermans et al. \cite{alber2019reliability} showed that pixel attribution methods can be highly unreliable. Other well-known explanation methods, such as LIME \cite{Ribeiro2016WhySI} and SHAP \cite{NIPS2017_7062} can be applied to images, by first segmenting the image into superpixels and assigning a relevance score to each superpixel. However, research recently revealed various vulnerabilities of LIME and SHAP \cite{Slack2020}. 

\subsubsection*{Pixel Mask Explanations}
Mask explanations do not attribute individual relevance scores to (super)pixels but rather optimize a mask to delete as much information of the image as possible while retaining the classifier's prediction. The advantage of this approach is that one optimizes for a natural interpretability objective that can be quickly validated in two steps: (1) Determining which and how much information was deleted by the mask (2) Computing the class probability score after masking the image. Fong and Vedaldi \cite{Fong2017InterpretableEO} were the first to find an explanation mask as a solution to an optimization problem that can be summarized as 
\begin{equation}
    \max_{m\in\mathcal{M}}\; \mathop{\mathbb{E}}_{u\sim\nu} \Big[\Phi_c(x\odot m + (1-m)\odot u)\Big] - \lambda\cdot\|m\|_1,
\end{equation}
where $x\in\R^d$ is the input image, $\Phi_c$ returns the classifier's class probability, $u\in\R^d$ is a random perturbation from a predefined probability distribution $\nu$ (\eg, constant, blur, or noise), $m\in\mathbb{R}^d$ is a mask on $x$, $\lambda\in\R_+$ is the Lagrange multiplier encouraging sparsity in $m$, and $\mathcal{M}$ is a prior over the explanation masks. Fong and Vedaldi \cite{Fong2017InterpretableEO} found that not choosing a prior, \ie $\mathcal{M} = [0,1]^d$, produces explanation artifacts. To mitigate artifacts, they enforce a more regular structure on the mask by using an  upsampled lower resolution mask and regularizing the mask's total variation (TV). Fong et al. \cite{Fong_2019_ICCV} improved this method by reformulating the area constraint  and adding a new parametric family of smooth masks, which allowed to remove all hyperparameters from the optimization problem. The masks remain extremely smooth but the main advantage is that the size of the mask can be controlled by an area constraint that directly controls the size of the mask as a percentage of the total image area. We will refer to this method  as \emph{smooth pixel mask} to highlight the fact that this method produces extremely smooth explanations due to strong smoothness constraints on the mask.

\subsubsection*{Wavelet Mask Explanations}
Kolek et al. \cite{kolek2022cartoon} 
 proposed the \emph{CartoonX} method, which  masks in the wavelet representation of images to extract the relevant piece-wise smooth part of an image. Wavelets sparsely represent piece-wise smooth images and therefore the wavelet sparsity constraint in CartoonX typically leads to piece-wise smooth explanations. However, Kolek et al. \cite{kolek2022cartoon} do not compare CartoonX to smooth pixel masks \cite{Fong_2019_ICCV}, which also enforce piece-wise smoothness by regularizing and parameterizing the pixel mask. Besides lacking a clear advantage over smooth pixel masks, we find that CartoonX produces blurry spatial areas that can be quite difficult to interpret (see Figure \ref{fig:cartoonx vs waveletx vs shearletx}). ShearletX improves upon CartoonX by (a) leveraging the advantages of shearlets over wavelets for representing edges in images, (b) eliminating an ambiguous spatial blur in CartoonX, and (c) having a clear advantage over smooth pixel masks.

\section{Background}\label{sec:background}
To develop and analyze ShearletX we need to first give the necessary technical background for wavelets \cite{mallat1999wavelet} and shearlets \cite{kutyniok2012shearlets} in the context of images. 

\subsubsection*{Wavelets for Images}
A gray-level image can be mathematically modeled as a square integrable function $f:\R^2\to\R$. 
A wavelet $\psi: \R^2 \to \R$ is a spatially localized bump with oscillations, that is used to probe the local frequency, or scale, of an image.  Three suitably chosen mother wavelets $\psi^1,\psi^2,\psi^3\in L^2(\R^2)$ with dyadic dilations and translations yield an orthonormal basis 
\begin{equation}
    \Big\{\psi^k_{j,n}\coloneqq\frac{1}{2^j}\psi^k\Big(\frac{\cdot - 2^jn}{2^j}\Big)\Big\}_{j\in\Z, n\in\Z^2,1\leq k\leq 3}
\end{equation}
of the square integrable function space $L^2(\R^2)$. The three indices $k\in\{1,2,3\}$ correspond to vertical, horizontal, and diagonal directions. The image $f$ can be probed in direction $k\in\{1,2,3\}$, at location $n\in\Z^n$, and at scale $2^j\in\Z$ by taking the inner product $\langle f,\psi^k_{j,n}\rangle$, which is called a \emph{wavelet (detail) coefficient}. The wavelet coefficient $\langle f,\psi^k_{j,n}\rangle$ has high amplitude if the image $f$ has sharp transitions over the support of $\psi^k_{j,n}$. Pairing $\psi^1,\psi^2,\psi^3\in L^2(\R^2)$, with an appropriate scaling function $\phi\in L^2(\R^2)$, defines a multiresolution approximation. More precisely, for all $J\in\Z$, any finite energy image $f$ decomposes into 
\begin{equation}
    f = \sum_{n\in\Z^2}a_n \phi_{J,n} + \sum_{1\leq k\leq 3}\sum_{j\leq J, } d^k_{j,n}\psi^k_{j,n},\label{eq: basis equation}
\end{equation}
where $a_n=\langle f, \phi_{J,n}\rangle$ and $d^k_{j,n}=\langle f, \psi^k_{j,n}\rangle$ are the approximation coefficients at scale $J$ and wavelet coefficients at scale $j-1$, respectively. In practice, images are discrete signals $x[n_1,n_2]$ with pixel values at discrete positions $n=(n_1,n_2)\in \Z^2$ but they can be associated with a function $f\in L^2(\R^2)$ that is approximated at some scale $2^L$ by $x$. The discrete wavelet transform (DWT) of an image $x$ then computes an invertible wavelet image representation
\begin{equation}
  \mathcal{DWT}(x) = \Big\{a_{J,n}\Big\}_n \cup \Big\{ d^1_{j,n}, d^2_{j,n} d^3_{j,n}\Big\}_{L<j\leq J,n}
\end{equation}
corresponding to discretely sampled approximation and wavelet coefficients of $f$.

\subsubsection*{Shearlets for Images}\label{subsec:shearlets for images}
Wavelets are optimal sparse representations for signals with point singularities \cite{devore_1998}, in particular, piece-wise smooth 1d signals. However, images are 2d signals where many singularities are  edges, which are anisotropic (directional), and are not optimally represented by wavelets. Shearlets \cite{kutyniok2012shearlets} extend wavelets and form a multiscale directional representation of images, which allows efficient encoding of anisotropic features. Next, we describe the continuous shearlet system, and note that the discrete shearlet system is just a discrete sampling of the continuous system. The shearlet transform was introduced in \cite{gitta2005shearlets}. Similarly to  the wavelet transform, the shearlet transform applies transformations to a function, called the mother shearlet, to generate a filter bank. The transformations are (a) translation, to change the location of the shearlet probe, (b) anisotropic dilation, to change the scale and shape, creating elongated probes of different scales, and  (c) shearing, to probe at different orientations. To dilate and shearing a function, we define the following three matrices:
\begin{equation*}
    A_a\coloneqq \begin{pmatrix}
        a & 0 \\
        0 & \sqrt{a}
    \end{pmatrix},
    \quad
    \widetilde{A}_a\coloneqq \begin{pmatrix}
        \sqrt{a} & 0 \\
        0 & a
    \end{pmatrix},
    \quad
    S_s \coloneqq \begin{pmatrix}
        1 & s \\
        0 & 1
    \end{pmatrix},
\end{equation*}
where $s,a\in \Real$.
Given $(a,s,t) \in \Real_+ \times \Real \times \Real^2$, $\psi \in L^2(\Real^2)$, and $x\in \Real^2$, we define
\begin{equation}\label{eq:shearlets}
\begin{aligned}
\psi_{a,s,t, 1}(x) &\coloneqq a^{-\frac{3}{4}} \psi\left(A_a^{-1}S_s^{-1} (x-t)\right),\\
\psi_{a,s,t, -1}(x) &\coloneqq a^{-\frac{3}{4}} \widetilde{\psi}\left(\widetilde{A}_a^{-1}{(S_s^{T})}^{-1} (x-t)\right),
\end{aligned}
\end{equation}
where $\widetilde{\psi}(x_1,x_2) \coloneqq \psi(x_2,x_1)$, for all $x = (x_1,x_2) \in \Real^2$, and $\psi$ is the mother shearlet. The continuous shearlet transform is then defined as follows.
\begin{definition}[Continuous Shearlet Transform]
Let $\psi \in L^2(\Real^2)$. Then the family of functions $\psi_{a,s,t, \iota} \colon \Real^2 \to \Real$ parametrized 
by $(a,s,t, \iota) \in \Real^+ \times \Real \times \Real^2 \times \{-1,1\}$ that are defined in \eqref{eq:shearlets} is called a \emph{shearlet system}.
The corresponding  shearlet transform is defined by
\begin{equation}
        \mathcal{SH}_\psi: L^2(\Real^2) \to L^\infty\bigl(\Real^+ \times \Real \times \Real^2 \times \{-1,1\}\bigr), \label{shearlet transform}
\end{equation}
where $\mathcal{SH}_\psi(f)(a,s,t,\iota) \coloneqq \langle f, \psi_{a,s,t,\iota}\rangle$.

\end{definition}
 The continuous shearlet transform can be digitized to the \emph{digital shearlet transform}\footnote{For the digital shearlet transform, we used pyshearlab from  \url{http://shearlab.math.lmu.de/software\#pyshearlab}.}  \cite{digitalshearlet2016}, denoted as $\mathcal{DSH}$, by sampling a discrete system from the function system (\ref{shearlet transform}). Note that the digital shearlet transform, like the discrete wavelet transform, is an invertible transformation.

\section{Method}\label{sec:method}

In this section, we develop our novel mask explanation method \emph{ShearletX} (Shearlet Explainer).

\subsubsection*{ShearletX}\label{subsec:ShearletX Method}
\begin{figure}[t]

  \centering
  \begin{minipage}{1.0\linewidth}
  \centering
    \includegraphics[width=1.\linewidth]{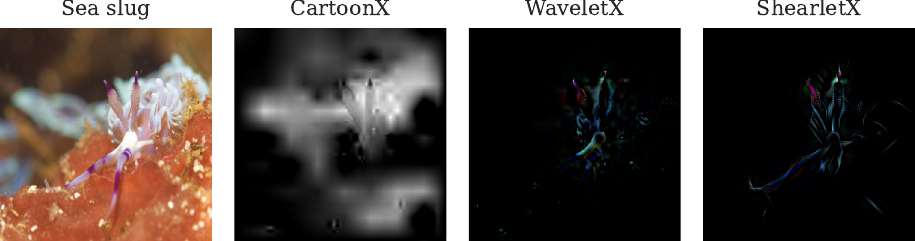}
    \vspace{-.2cm}
    \end{minipage}
  \begin{minipage}{1.0\linewidth}
  \centering
    \includegraphics[width=1.\linewidth]{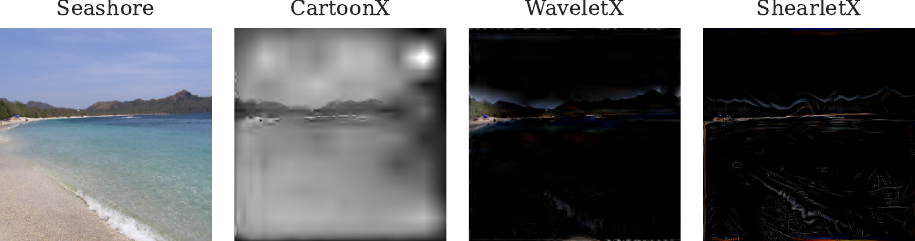}
    \vspace{-.2cm}
    \end{minipage}
  \begin{minipage}{1.0\linewidth}
  \centering
    \includegraphics[width=1.\linewidth]{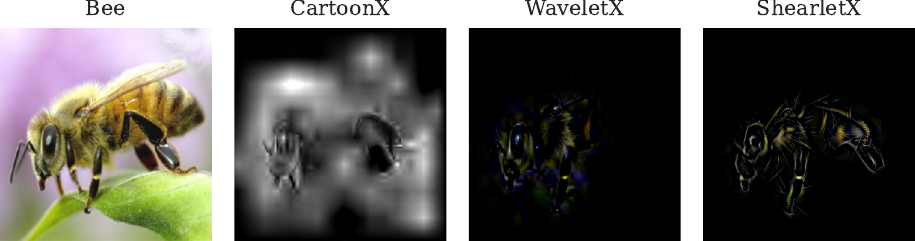}
    \end{minipage}

   \caption{Left Column: Input images classified by VGG-19 \cite{vgg19}. Comparing CartoonX by Kolek et al. \cite{kolek2022cartoon}, WaveletX (ours), and ShearletX (ours). WaveletX improves CartoonX significantly due to the spatial penalty that eliminates undesirable blurry spatial areas that are difficult to interpret.  Note that ShearletX represents relevant edges better and produces much crisper explanations than WaveletX. This is because anisotropic features, such as edges, can be encoded more efficiently with shearlets than with wavelets.
   }
   \label{fig:cartoonx vs waveletx vs shearletx}
\end{figure}

The optimization objective for ShearletX is 
\begin{align}
    \max_{m} \;&\mathop{\mathbb{E}}_{u\sim \nu}\Big[\Phi_c(\mathcal{DSH}^{-1}(m\odot \mathcal{DSH}(x) + (1-m)\odot u))\Big] \nonumber\\
    \;&- \lambda_1 \|m\|_1 - \lambda_2  \|\mathcal{DSH}^{-1}(m\odot \mathcal{DSH}(x))\|_1, \label{eq: shearletx objective}
\end{align}
where $m\in[0,1]^n$ denotes a mask on the digital shearlet coefficients, $\Phi_c$ returns the class probability of the classifier, $\nu$ is the perturbation distribution, $\lambda_1\in\R_+$ controls the sparseness of the shearlet mask, and $\lambda_2\in\R_+$ controls the penalty for spatial energy.  The final ShearletX explanation is given by masking the shearlet coefficients and inverting the masked shearlet coefficients back to pixel space, \ie,
\begin{equation}
\text{ShearletX}(x)\coloneqq\mathcal{DSH}^{-1}(m\odot \mathcal{DSH}(x)).    
\end{equation}

The expectation term in the ShearletX objective (\ref{eq: shearletx objective}) ensures that the image after masking and perturbing retains the classification decision. We find that the spatial penalty is a crucial technical addition, that deletes classifier irrelevant spatial energy in the explanation and ensures that no irrelevant blurry areas remain in the explanation, as opposed to CartoonX \cite{kolek2022cartoon} (see Figure \ref{fig:cartoonx vs waveletx vs shearletx}). A smooth area is retained by ShearletX only if it is important for the classifier (see English Foxhound and Frog in Figure \ref{fig:teaser}). Moreover, the color can be distorted if the original color is not important. When the color is important for the classifier, ShearletX will keep the color (see, for example, Figure \ref{fig:teaser}, where ShearletX keeps the brown color of the English Foxhound's head and the green color of the Frog).

For the perturbation distribution $\nu$, we deliberately avoid in-distribution perturbations from an in-painting network, as opposed to \cite{chang_GAN_explanation_2018}. The reason is that in-distribution masks may delete parts of the image that are relevant to the classifier if the in-painter in-fills such parts effectively, making the explanation hard to interpret. 
Therefore, we follow the out-of-distribution approach of CartoonX \cite{kolek2022cartoon}, and use white noise in the representation system  that is adapted to the mean and variance of the shearlet coefficients (see Supplementary Material \ref{Appendix: implementation details} for details).

\subsubsection*{WaveletX}
Solely adding the spatial penalty to the CartoonX objective yields significantly better explanations than the original CartoonX method and eliminates the undesirable blurry areas, that are difficult to interpret (see Figure \ref{fig:cartoonx vs waveletx vs shearletx}). We will refer to this new method as WaveletX to highlight the fact that WaveletX and ShearletX only differ in the choice of the representation system. The WaveletX optimization objective is
\begin{align}
    \max_{m} \;&\mathop{\mathbb{E}}_{u\sim \nu}\Big[\Phi_c(\mathcal{DWT}^{-1}(m\odot \mathcal{DWT}(x) + (1-m)\odot u))\Big] \nonumber\\
    \;&- \lambda_1 \|m\|_1 - \lambda_2  \|\mathcal{DWT}^{-1}(m\odot \mathcal{DWT}(x))\|_1,
\end{align}
where $\mathcal{DWT}$ denotes the discrete wavelet transform of images. Note that we recover the CartoonX \cite{kolek2022cartoon} objective if we set $\lambda_2=0$. In Figure \ref{fig:cartoonx vs waveletx vs shearletx}, we compare CartoonX \cite{kolek2022cartoon}, WaveletX, and ShearletX on examples classified by a VGG-19\cite{vgg19} network trained on ImageNet \cite{deng2009imagenet}.

\section{Theory}\label{sec:theory}
Fong and Vedaldi \cite{Fong2017InterpretableEO} first observed the problem of explanation artifacts for mask explanations. We identify explanation artifacts as artificial edges in the explanation (see Figure \ref{fig:vis artificial edges}).   Artificial edges can form patterns that activate the class label but are not present in the original image. Therefore, a good mask explanation method should not be able to form many artificial edges.  In this section, we show theoretically that ShearletX and WaveletX are not prone to artificial edges, by proving that the continuous counterparts of WaveletX and ShearletX cannot create edges that are not present in the original image.

\subsubsection*{ShearletX is Resistant to Edge Artifacts}

In this section, we prove that ShearletX applied to continuous images cannot create artificial edges. 
 When working with shearlets, it is common to model edges as the \emph{wavefront} set of a continuous image \cite{AndradeLoarca2019ExtractionOD,andrade2020shearlets}. The wavefront set is a concept that characterizes the oriented singularities of distributions, in particular, of $L^2(\R^2)$ functions. We state the mathematical definition of the wavefront set below and provide an intuitive explanation afterwards since the definition is somewhat technical.
\begin{definition}{\cite[Section 8.1]{AnLinPDOHoermander}}\label{def:WaveFrontSet}
Let $f \in L^2(\R^2)$ and $k \in \N$. A point $(x, \lambda)\in \R^2 \times \mathbb{S}^1$ is a \emph{$k$-regular directed point of $f$} if there exist open neighbourhoods $U_x$ and $V_\lambda$ of $x$ and $\lambda$, respectively, and a smooth function $\phi\in C^\infty(\R^2)$ with $\supp \phi \subset U_x$ and $\phi(x) = 1$ such that
\begin{equation*}
    \bigl| \widehat{\phi f} (\xi) \bigr|
    \leq C_k \bigl( 1+|\xi| \bigr)^{-k}
    \quad \text{$\forall$ $\xi \in \R^2 \setminus \{0\}$ s.t. $\xi /|\xi| \in V_\lambda$}
\end{equation*}
holds for some $C_k >0$, where $\widehat{f}$ denotes the Fourier transform of $f$. The \emph{$k$-wavefront set} $\WF_k(f)$ is the complement of the set of all $k$-regular directed points and the \emph{wavefront set} is defined as $\WF(f) \coloneqq \bigcup_{k \in \N} \WF_k(f)$.
\end{definition}
The wavefront set defines the directional singularities of a function $f$ via the Fourier decay of local patches of the function.
For piece-wise smooth images with discontinuities along smooth curves, the wavefront set is exactly the set of edges with the orientation of the edge. This explains why the wavefront set is a good model for edges. The wavefront set of an image can be completely determined by the decay properties of its shearlet coefficients \cite{grohs2011continuous}. 
More precisely, the regular point-direction pairs of an image (the complement of the wavefront set) are exactly the pairs of locations and directions where the shearlet coefficients exhibit rapid decay as $a \to 0$  (the precise statement can be found in Supplementary Material \ref{Appendix Theory}). 
We use this property of shearlets to prove that ShearletX cannot produce artificial edges for continuous images.
\begin{theorem}\label{thm:shearletx}
Let $x\in L^2[0,1]^2$ be an image modeled as a $L^2$-function. Let $m$ be a bounded mask on the shearlet coefficients of $x$ and let $\hat x$ be the image $x$ masked in shearlet space with mask $m$. Then, we have  $\textrm{WF}(\hat x)\subset \textrm{WF}(x)$ and thus masking in shearlet space does not create new edges.

\end{theorem}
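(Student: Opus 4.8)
The plan is to prove the contrapositive at the level of individual point--direction pairs: if $(x_0,\lambda)\in[0,1]^2\times\mathbb{S}^1$ is a \emph{regular directed point} of $x$, i.e. $(x_0,\lambda)\notin\WF(x)$, then it is also a regular directed point of $\hat x$; taking the union over all such pairs then gives $\WF(\hat x)\subset\WF(x)$. The bridge between the wavefront set and the shearlet coefficients is the resolution-of-the-wavefront-set theorem recalled in Supplementary Material~\ref{Appendix Theory} (following Grohs \cite{grohs2011continuous}): $(x_0,\lambda)\notin\WF(f)$ if and only if there are a neighborhood $U$ of $x_0$ and a neighborhood $W$ of the shearing value $s_0=s_0(\lambda)$ associated with $\lambda$ such that the shearlet coefficients of $f$ decay rapidly at fine scales, that is, for every $N\in\N$ there is $C_N>0$ with
\begin{equation*}
\bigl|\mathcal{SH}_\psi(f)(a,s,t,\iota)\bigr|\le C_N\, a^{N}\qquad\text{for all }t\in U,\ s\in W,\ \iota\in\{-1,1\},\ 0<a\le 1,
\end{equation*}
where the boundary regime $s\to\pm\infty$ is handled symmetrically via $\widetilde\psi$ and is suppressed below. (We regard $x$ as a compactly supported element of $L^2(\R^2)$ so that $\mathcal{SH}_\psi$ applies.)

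\emph{Reduction.} Fix $(x_0,\lambda)\notin\WF(x)$ and let $U,W$ be the neighborhoods provided by the characterization applied to $f=x$. It then suffices to show that $\mathcal{SH}_\psi(\hat x)$ satisfies the same rapid fine-scale decay on a possibly slightly smaller $U\times W$, since the characterization applied in the reverse direction to $f=\hat x$ then gives $(x_0,\lambda)\notin\WF(\hat x)$. \emph{Coefficient relation.} Reading ``masking in shearlet space'' as reconstruct-then-analyze, $\hat x=\mathcal{SH}_\psi^{-1}\bigl(m\odot\mathcal{SH}_\psi(x)\bigr)$ with $\mathcal{SH}_\psi^{-1}$ the synthesis operator of the (Parseval) shearlet frame, and the reproducing formula yields
\begin{equation*}
\mathcal{SH}_\psi(\hat x)(\gamma)=\int K(\gamma,\gamma')\, m(\gamma')\, \mathcal{SH}_\psi(x)(\gamma')\, d\mu(\gamma'),\qquad K(\gamma,\gamma')=\langle\psi_{\gamma'},\psi_{\gamma}\rangle,
\end{equation*}
where $\gamma=(a,s,t,\iota)$ and $d\mu$ is the admissibility measure. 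Splitting this integral into the part over $\gamma'$ whose location and shear lie in a slightly enlarged neighborhood $U'\times W'$ and the complementary part, the first part is bounded by $\|m\|_\infty$ times the rapid fine-scale decay of $\mathcal{SH}_\psi(x)$ on $U'\times W'$, while the second part is controlled by the strong joint space--scale--direction localization of shearlets, which forces $K(\gamma,\gamma')$ to decay super-polynomially in the scale ratio and in the shearlet distance between $\gamma$ and $\gamma'$; both contributions are therefore $O(a^N)$ for every $N$, uniformly for $\gamma$ with location in $U$ and shear in $W$.

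\emph{Main obstacle.} The delicate step is the second one: because the continuous shearlet transform is not onto, one cannot simply write $\mathcal{SH}_\psi(\hat x)=m\odot\mathcal{SH}_\psi(x)$, and one must verify that convolving a merely bounded symbol $m$ against the reproducing kernel $K$ is microlocally benign, i.e. does not manufacture slow fine-scale decay where $x$ had none. This is precisely the point that requires quantitative localization estimates for shearlet cross-inner-products. Note that if one instead works with an orthonormal shearlet-type basis, or simply \emph{defines} $\hat x$ to be any $L^2$ function with $\mathcal{SH}_\psi(\hat x)=m\odot\mathcal{SH}_\psi(x)$, then this step collapses to the one-line pointwise bound $\bigl|\mathcal{SH}_\psi(\hat x)\bigr|\le\|m\|_\infty\bigl|\mathcal{SH}_\psi(x)\bigr|$ and the theorem follows immediately from the characterization above. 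The analogous resilience statement for WaveletX is the same argument with the shearlet characterization of the wavefront set replaced by the isotropic wavelet/Littlewood--Paley characterization, and is in fact simpler since wavelets form an orthonormal basis, so no reproducing-kernel analysis is needed.
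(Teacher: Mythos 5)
Your core argument---characterizing $\WF$ through the fine-scale decay of the shearlet coefficients and observing that a bounded mask can only improve that decay---is exactly the argument the paper gives. In fact, the paper takes precisely the ``collapsed'' reading you mention at the end: it asserts, invoking invertibility of the shearlet transform, that $\mathcal{SH}_\psi(\hat x)=m\odot\mathcal{SH}_\psi(x)$, and then concludes at every regular directed point with the one-line pointwise bound $|\mathcal{SH}_\psi(\hat x)(a,s,t)|\le|\mathcal{SH}_\psi(x)(a,s,t)|=O(a^N)$ for all $N$, exactly as in your reduction (the cone $|s|<1$ and its complement being handled symmetrically via $\widetilde\psi$). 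So, under the convention that $\hat x$ is by definition the function whose shearlet coefficients are the masked coefficients of $x$, your proof is complete and coincides with the paper's.

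Where you genuinely diverge is in taking seriously that the continuous shearlet transform is not surjective, so that if $\hat x$ is produced by synthesis (reconstruct-then-analyze, which is what the discrete algorithm actually does), its coefficients are the masked ones only after passing through the reproducing kernel $K(\gamma,\gamma')=\langle\psi_{\gamma'},\psi_\gamma\rangle$. The paper does not address this point at all, and your noticing it is a fair observation about the published proof. However, as written, your handling of it is only a sketch: the claim that the ``far'' part of the kernel integral is $O(a^N)$ uniformly near the regular directed point rests on unstated localization estimates for $K(\gamma,\gamma')$ (super-polynomial off-diagonal decay in scale ratio, shear, and position, with enough uniformity to survive integration against $d\mu$ over all scales, including the coarse-scale and low-frequency parts of the representation), and these estimates are precisely the technical content such a proof would need to supply---a bounded multiplier acting through a continuous frame is not automatically microlocally harmless. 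So if you intend the synthesis reading as your actual proof, there is a genuine gap at the kernel-localization step; if you adopt the paper's coefficient-level definition of $\hat x$, your argument reduces to the paper's and is correct as it stands.
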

 
The idea behind the proof is that creating artificial singularities in regular point-directions of the image would require creating asymptotically slower shearlet decay by masking the coefficients. This is impossible, as masking can only increase the decay rate. See Supplementary Material \ref{Appendix Theory} for a full proof of Theorem \ref{thm:shearletx}.  

While in the real world images are digital,  they are still an approximation of continuous images that becomes better with increasing resolution. In Section \ref{sec:experiments}, we show experimentally that Theorem \ref{thm:shearletx} indeed predicts the behavior of masked digital images, and ShearletX is not susceptible to explanation artifacts.

\subsubsection*{WaveletX is Resistant to Edge Artifacts}
When analyzing WaveletX, we opt to model singularities via local Lipschitz regularity instead of using the wavefront set approach. This approach is preferable since the Lipschitz regularity of a function is completely characterized by the rate of decay of its wavelet coefficients, as the scale goes to zero \cite[Theorem 9.15]{mallat_sparse_edition}. We hence define a regular point as a point for which the image is $\alpha$-Lipschitz regular in a neighborhood of the point, with $\alpha\geq 1$ (see Definition \ref{def: lipschitz regularity} in the supplementary material for Lipschitz regularity, and, in particular, the definition for $\alpha> 1$). A singular point is a point which is not regular. Singular points describe image elements such as edges and point singularities.

 \begin{theorem}[Informal version of Theorem \ref{theorem: waveletx formal version}]\label{thm:waveletx}
Let $x\in L^2[0,1]^2$ be an image modeled as a $L^2$-function. Masking the wavelet coefficients of $x$ with a bounded mask cannot create new singularities.
\end{theorem}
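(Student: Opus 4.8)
The plan is to reduce the statement to the known wavelet characterization of local Lipschitz regularity, then show that masking wavelet coefficients can only improve (never worsen) the decay estimates that certify regularity. First I would fix a point $x_0\in[0,1]^2$ that is $\alpha$-Lipschitz regular for $x$ with $\alpha\ge 1$; by \cite[Theorem 9.15]{mallat_sparse_edition} this is equivalent to a uniform bound of the form $|\langle x,\psi^k_{j,n}\rangle|\le C\,2^{j(\alpha+1)}$ for all wavelet coefficients whose support is contained in a fixed neighbourhood $U$ of $x_0$ (with the $2d$-normalization constants from the excerpt; the exponent is the $2$-dimensional analogue of the $1$-d statement, and $\alpha>1$ requires the usual vanishing-moment caveat spelled out in the supplementary Definition~\ref{def: lipschitz regularity}). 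The masked image is $\hat x=\mathcal{DWT}^{-1}(m\odot\mathcal{DWT}(x))$, whose wavelet coefficients are literally $\langle\hat x,\psi^k_{j,n}\rangle = m^k_{j,n}\langle x,\psi^k_{j,n}\rangle$ since the wavelets form an orthonormal basis. Because $m$ is bounded, say $\|m\|_\infty\le M$, we get $|\langle\hat x,\psi^k_{j,n}\rangle|\le M\,|\langle x,\psi^k_{j,n}\rangle|\le MC\,2^{j(\alpha+1)}$ on the same neighbourhood $U$. Applying the converse direction of the wavelet characterization, $\hat x$ is $\alpha$-Lipschitz regular at $x_0$ as well; hence every regular point of $x$ is a regular point of $\hat x$, i.e.\ the singular set of $\hat x$ is contained in the singular set of $x$, which is the claim.

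The one genuine subtlety — and what I expect to be the main obstacle — is that the clean wavelet characterization of pointwise Lipschitz regularity requires the decay bound to hold not just at a single point but uniformly over a neighbourhood, and the ``only if'' direction of Mallat's theorem produces an a priori weaker conclusion (regularity with a possible logarithmic loss, or regularity only in a slightly shrunken neighbourhood). I would handle this by working with the precise two-dimensional statement recorded in the supplementary material (Theorem~\ref{theorem: waveletx formal version}), choosing the neighbourhoods $U'\Subset U$ exactly as in that statement, and noting that the multiplicative bound $|\langle\hat x,\psi^k_{j,n}\rangle|\le M|\langle x,\psi^k_{j,n}\rangle|$ is scale-by-scale and location-by-location, so whatever neighbourhood and exponent the forward characterization consumes for $x$, the same neighbourhood and exponent are available for $\hat x$. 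In other words, the inequality is strong enough that no geometric shrinkage beyond what is already needed for $x$ is introduced.

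A secondary point to be careful about is the boundary of $[0,1]^2$ and the passage between the discrete transform $\mathcal{DWT}$ and the continuous multiresolution decomposition \eqref{eq: basis equation}: the theorem is a statement about the continuous image $f\in L^2(\R^2)$ associated with $x$, so I would phrase the argument entirely in terms of the continuous wavelet basis $\{\psi^k_{j,n}\}$ and the infinite decomposition, treating the mask as acting on the full (countable) coefficient sequence, with only finitely many entries not equal to $1$ in the digital realization — this keeps $\hat x\in L^2$ automatic and makes the orthonormality argument for $\langle\hat x,\psi^k_{j,n}\rangle=m^k_{j,n}\langle x,\psi^k_{j,n}\rangle$ immediate. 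Finally I would remark, paralleling the discussion after Theorem~\ref{thm:shearletx}, that the digital setting inherits this behaviour approximately as resolution increases, so no separate discrete argument is needed. The full details, including the precise form of the neighbourhoods and the $\alpha>1$ bookkeeping, are deferred to Supplementary Material~\ref{Appendix Theory}.
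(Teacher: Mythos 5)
Your proposal is correct and follows essentially the same route as the paper's proof: localize around a Lipschitz regular point, invoke Mallat's two-sided wavelet characterization of local uniform Lipschitz regularity, and observe that a bounded mask only multiplies each wavelet coefficient by a bounded factor, so the decay certifying regularity with the same exponent $\alpha$ and the same neighbourhood is preserved for $\hat x$. The localization subtlety you flag as the main obstacle is precisely what the paper's appendix resolves, using nested cutoff squares around $t^*$, a projection of $\hat x$ onto sufficiently fine scales, and a corollary that converts the one-sided decay bound back into uniform Lipschitz regularity before re-multiplying by a smooth cutoff.
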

The above theorem is an informal version of our formal Theorem \ref{theorem: waveletx formal version} in Supplementary Material \ref{Appendix Theory}. Similarly to ShearletX, Theorem \ref{thm:waveletx} predicts the behavior for digital images well, and WaveletX is not prone to produce explanation artifacts.

\section{Explanation Metrics for Mask Explanations}\label{sec:explanation metrics for mask explanations}
We now propose two new explanation metrics for mask explanations: (1) The \emph{conciseness-preciseness} (CP) score to evaluate the preciseness of a mask explanation adjusted for its conciseness (2) The \emph{hallucination score} to quantify explanation artifacts.

\begin{figure}[t]

  \centering
  \begin{minipage}{1.0\linewidth}
  \centering
    \includegraphics[width=1.\linewidth]{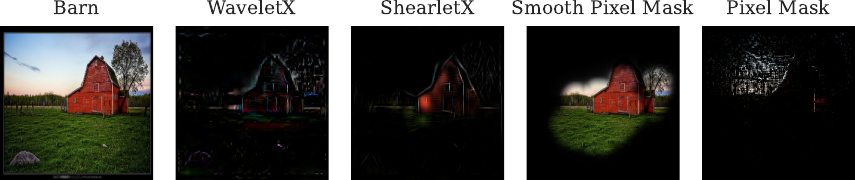}
    \vspace{-.25cm}
    \end{minipage}
  \begin{minipage}{1.0\linewidth}
  \centering
    \includegraphics[width=1.\linewidth]{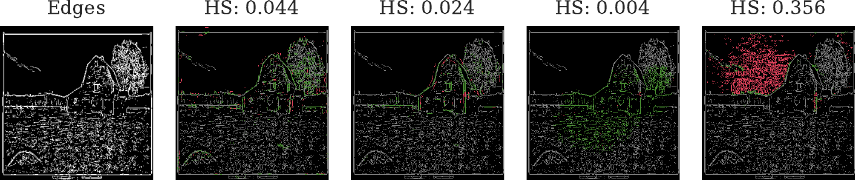}
    \end{minipage}

   \caption{First row: image correctly classified as barn, WaveletX, ShearletX,  smooth pixel mask by Fong et al. \cite{Fong2017InterpretableEO}, and pixel mask without smoothness constraints. Second row visualizes the edges in the image and all explanations. Edges marked in red are artificial and quantified by the hallucination score (HS). Green edges are present in the original image. The pixel mask without smoothness constraints hallucinates an artificial barn, which is an example of an explanation artifact and results in a very high HS.}
   \label{fig:vis artificial edges}
\end{figure}

\subsubsection*{Conciseness and Preciseness}
Several metrics, such as remove and retrain (ROAR) \cite{roar2019} and area over the perturbation curve (AOPC) \cite{Arras2017ExplainingRN}, have been proposed to quantify the fidelity of saliency maps. However, these metrics are designed for pixel attribution methods that provide an ordering of feature importance. Mask explanations can be 
immediately evaluated by simply plugging in the masked image into the classifier and checking the class probability. A good mask explanation retains the class probability of the prediction. We refer to this property as \emph{preciseness} of the explanation. However, a good explanation mask should not only be precise but also \emph{concise}, \ie, the mask should extract the least amount of information from the available pool of data. We introduce a class of new explanation metrics that combine both aspects into one metric, which we call  \emph{conciseness-preciseness} (CP) scores. The definition is
\begin{equation}
    \text{CP} = \frac{\text{Retained Class Probability}}{\text{Retained Image Information}}.
\end{equation}
The retained class probability (preciseness) is computed as the class probability after masking divided by the class probability before masking.  We compute the retained information of the image (conciseness) as the information of the masked image divided by the information of the original image.
We experiment with three different ways of measuring the information of the image:
(1) CP-Entropy: The entropy in the respective image representation system (wavelet, shearlet, or pixel), (2) CP-$\ell_1$: The $\ell_1$-norm in the respective representation system (wavelet, shearlet, or pixel), (3) CP-$\ell_1$ Pixel: The $\ell_1$-norm in pixel space irrespective of representation system. For the CP-Entropy, we compute the retained image information of an image with representation coefficients $\{c_i\}_i$ and mask $\{m_i\}_i$ as 
\begin{equation}
 \exp\big(H\{|m_ic_i|^2\}_i\big)/\exp\big( H\{|c_i|^2\}_i\big)\label{exponential entropy of image}
\end{equation}
where $H$ denotes the entropy of the induced probability distributions. We use here the exponential of the  entropy, also called the \emph{extent}  \cite{exponential-entropy}, to avoid dividing by zero in the zero-entropy case. For CP-$\ell_1$, we compute the retained information as the \emph{relative sparsity} $ \|\{m_ic_i\}_i\|_1 / \|\{c_i\}_i\|_1$. Note that by measuring information through entropy or $\ell_1$-norm in the respective representation system we normalize for the fact that shearlets and wavelets already represent images much more sparsely than pixel representations. The CP score can be interpreted as a measure of preciseness adjusted for by the conciseness of the explanation. Explanations with higher CP scores are superior, assuming no explanation artifacts, which we measure with another metric that we define next.

\begin{figure*}[th]
     \centering
     \begin{subfigure}[b]{0.33\textwidth}
          \centering
  \includegraphics[width=.9\linewidth]{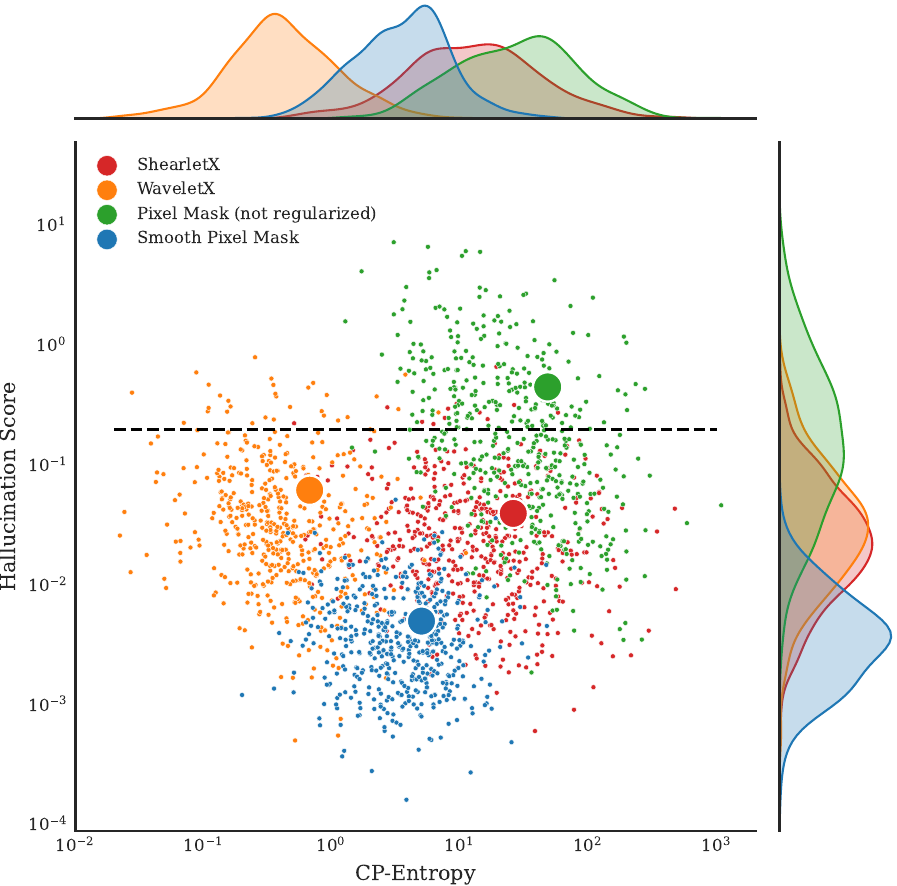}
         \caption{}
         \label{fig:y equals x}
     \end{subfigure}
     \hfill
     \begin{subfigure}[b]{0.33\textwidth}
         \centering
            \includegraphics[width=.9\linewidth]{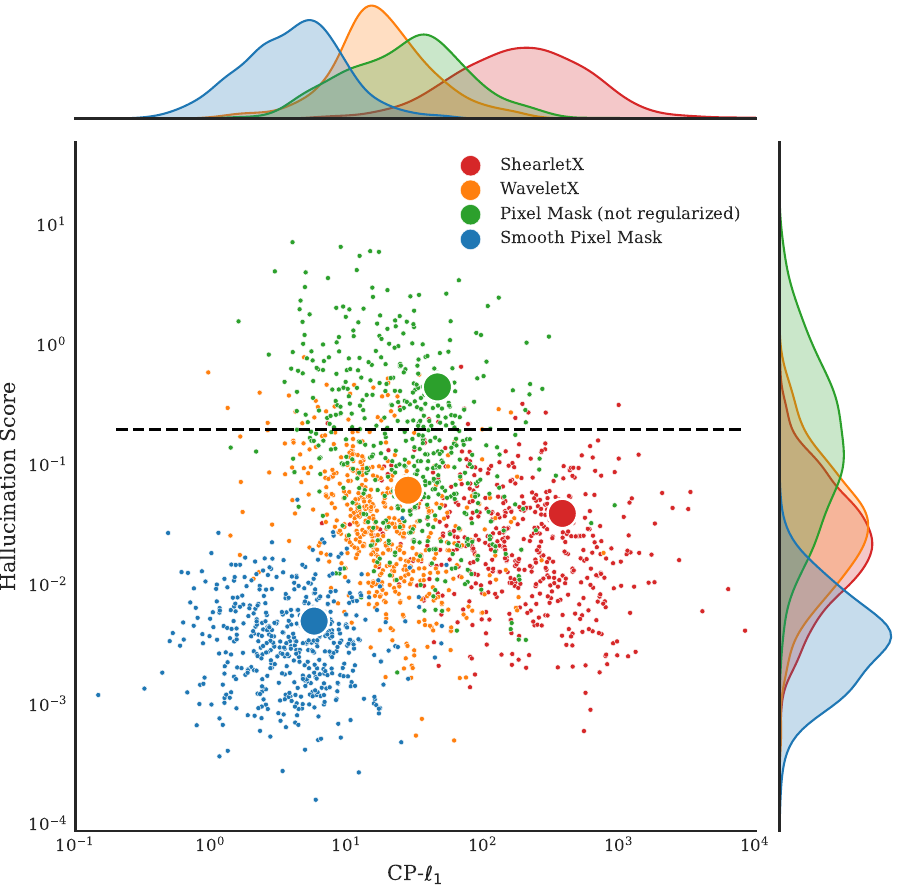}
         \caption{}
         \label{fig:three sin x}
     \end{subfigure}
     \begin{subfigure}[b]{0.33\textwidth}
          \centering
          \includegraphics[width=.9\linewidth]{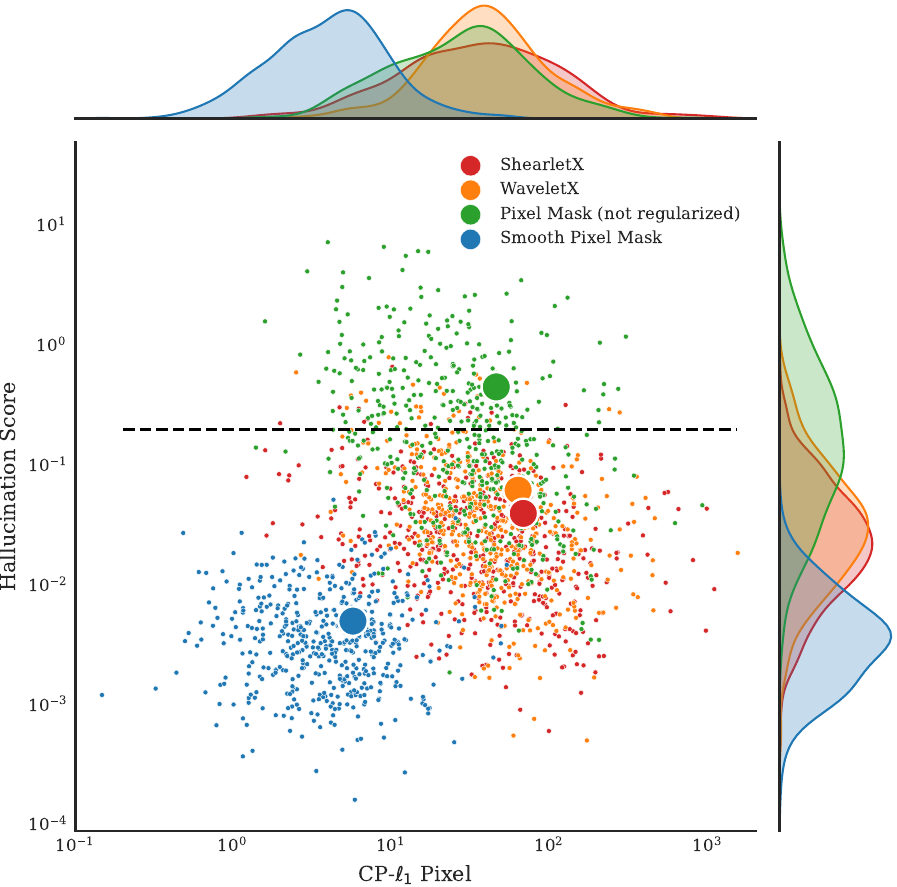}
         \caption{}
         \label{fig:y equals x}
     \end{subfigure}

        \caption{Scatter plot of hallucinaton score (lower is better) and conciseness-preciseness score (higher is better) for ShearletX, WaveletX, smooth pixel masks \cite{Fong_2019_ICCV}, and pixel mask without smoothness constraints. Retained information of an image for the CP score is measured (a) as entropy in respective representation, (b)  as $\ell_1$-norm in respective representation, (c) as $\ell_1$-norm in pixel space irrespective of representation. The black horizontal line marks  explanations where the artificial edges amount to 20\% of all edges in the original image. The mean hallucination and CP score are highlighted as big colored dots. ShearletX beats smooth pixel masks and WaveletX across all CP scores while having much better hallucination score than pixel masks without smoothness constraints.}
        \label{fig:scatter-plot}
\end{figure*}

\subsubsection*{Hallucination Score}
Artificial edges in a mask explanation are edges that are present after masking   that were not present in the original image. They can form artificial patterns, that appear as hallucinations (see the far right example in Figure \ref{fig:vis artificial edges}). The hallucinations can activate the class label, which is what the explanation optimized for, but do not actually explain the prediction. Therefore, artificial edges are undesirable and can lead to explanation artifacts. We propose to measure such explanation artifacts with a metric that we call \emph{hallucination score} (HS). We compute the hallucination score of an explanation as the number of edges that occur in the explanation but not in the original image, normalized by the number of edges that occur in the original image:
\begin{equation}
       \text{HS} = \frac{\#\Big(\text{Edges}(\text{Explanation})\setminus\text{Edges}(\text{Image})\Big)}{\#\text{Edges}(\text{Image})},
\end{equation}
where ``Explanation" refers to the image obtained after masking,  ``Image" refers to the original input image, and ``Edges" denotes an edge extractor that computes the set of pixels that belong to the edges of the input. Figure \ref{fig:vis artificial edges} provides an example for the hallucination score.

\section{Experiments}\label{sec:experiments}
In this section, we  experimentally show (a) that  ShearletX and WaveletX do not create a significant amount of artificial edges in practice and are thus resilient to explanation artifacts and (b) that ShearletX outperforms all other mask explanation methods in conciseness-preciseness scores.

\subsubsection*{Implementation}
We use the ImageNet \cite{deng2009imagenet} dataset and indicate in each experiment which classification model was used. For ShearletX, we optimize the shearlet mask with the Adam optimizer \cite{adam} for 300 steps on the ShearletX objective in (\ref{eq: shearletx objective}). We use a single shearlet mask for all three RGB channels as in \cite{kolek2022cartoon}.
The hyperparameter choice for ShearletX, WaveletX, smooth pixel masks, and pixel masks without smoothness constraints are discussed in detail in Supplementary Material \ref{Appendix: implementation details}.
We note as a limitation that, in practice, ShearletX is $5\times$ times slower than smooth pixel masks \cite{Fong_2019_ICCV} and WaveletX but not prohibitively slow for many applications (see Supplementary Material \ref{appendix: runtime} for runtime comparison). For details on the edge detector that we used for the hallucination score, see Supplementary Material \ref{Appendix: implementation details}.

\subsubsection*{Comparison of Mask Explanations}
We compute ShearletX, WaveletX, the smooth pixel mask by Fong et al. \cite{Fong2017InterpretableEO} (with area constraint 20\%), and the pixel mask without smoothness constraints for 500 random samples from the ImageNet validation dataset and compute the hallucination scores and conciseness-preciseness scores, which are plotted in Figure \ref{fig:scatter-plot}. We
use a ResNet18 \cite{He2016DeepRL} classifier but our results are consistent across different ImageNet classifiers and different area constraints (5\%, 10\%, and 20\%) for the smooth pixel mask (see Supplementary Material \ref{appendix: scatter plots}). 

The scatter plots in Figure \ref{fig:scatter-plot}  show that pixel masks without smoothness constraints
have extremely high hallucination scores, which confirms their proneness to explanation artifacts. The smooth pixel masks by Fong et al. \cite{Fong2017InterpretableEO} have almost no artificial edges (hallucination score very close to zero) because the masks are constrained to be extremely smooth. ShearletX and WaveletX have on average a moderately higher hallucination score than smooth pixel masks but their upper tail remains vastly lower than the tail for pixel masks without smoothness constraints (note the logarithmic scales in the scatter plots). In Figure \ref{fig:vis artificial edges}, one can see that a hallucination score in the order of $10^{-2}$ produces very few visible artificial edges. Therefore, we conclude from the scatter plot that ShearletX and WaveletX create very few artificial edges and are resilient to explanation artifacts. This also confirms that our Theorem \ref{thm:shearletx} and Theorem \ref{thm:waveletx} approximately hold for discrete images.  

Figure \ref{fig:scatter-plot} further shows that ShearletX has a significantly higher CP-Entropy, CP-$\ell_1$, and CP-$\ell_1$ Pixel score than the smooth pixel masks by Fong et al.
\cite{Fong2017InterpretableEO}.  This validates our claim that ShearletX can delete many irrelevant features that the smooth pixel masks cannot. 
ShearletX also outperforms WaveletX on all CP scores and even slightly on the hallucination score.  Pixel masks without smoothness constraints have the highest CP-Entropy score but are disqualified due to their unacceptable hallucination score. ShearletX, is the only method that has top performance on all CP scores.

\begin{figure*}[t]
  \centering
  \begin{minipage}{1.0\linewidth}
  \centering
    \includegraphics[width=1.\linewidth]{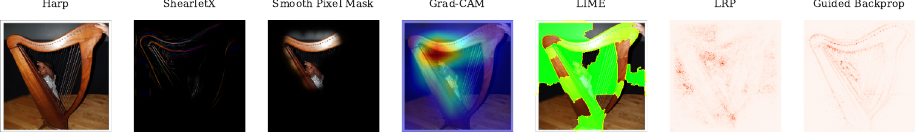}
    \vspace{-.2cm}
    \end{minipage}
  \begin{minipage}{1.0\linewidth}
  \centering
    \includegraphics[width=1.\linewidth]{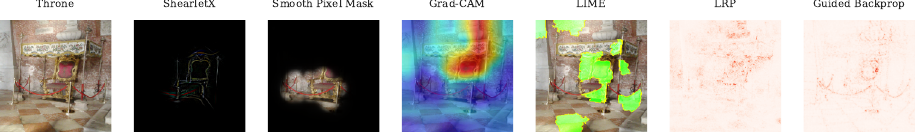}
    \vspace{-.2cm}
    \end{minipage}
  \begin{minipage}{1.0\linewidth}
  \centering
    \includegraphics[width=1.\linewidth]{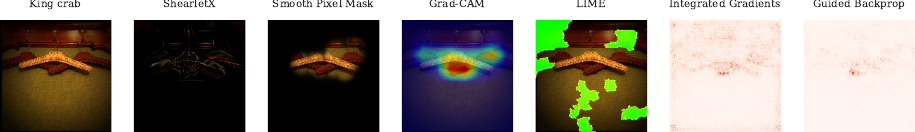}
    \vspace{-.2cm}
    \end{minipage}
      \begin{minipage}{1.0\linewidth}
  \centering
    \includegraphics[width=1.\linewidth]{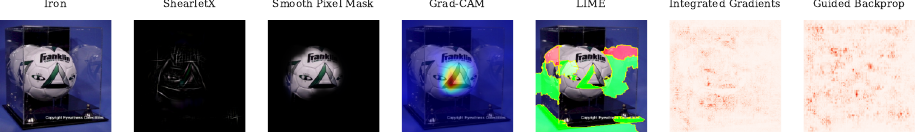}
    \vspace{-.2cm}
    \end{minipage}

   \caption{ShearletX compared to smooth pixel mask \cite{Fong_2019_ICCV}, LIME \cite{Ribeiro2016WhySI}, Grad-CAM \cite{Selvaraju2019GradCAMVE},  LRP \cite{Layerwise_relevance_prop2015}, Integrated Gradients \cite{Integrated_gradient_2017_sundararajan}, and Guided Backprop \cite{guided_backprop_2015}. First two examples are correctly classified by VGG \cite{vgg19} and last two examples are misclassified by MobilenetV3Small \cite{mobilenetv3}. For the harp, note that ShearletX is the only method that effectively seperates the harp from the child playing the harp, indicating, the harp can also be correctly classified without a human playing the harp. For the throne, we observe that ShearletX is able to seperate the throne from other decorations. Here, other methods such as guided backprop, seem to operate more like an edge detector and highlight many other edges such as the floor tiles. Smooth methods such as Grad-CAM  and the smooth pixel mask  give very rough localizations of part of the throne. Finally, for the misclassifications, which we find more challenging to explain, we can see that only ShearletX can effectively expose the crab and the iron that the classifier saw in the hangers and soccer ball, respectively.}
   \label{fig:general comparison}
\end{figure*}

\subsubsection*{General Saliency Map Comparison}
Our experimental results proved that ShearletX has an advantage over the state of the art mask explanation by Fong et al. \cite{Fong_2019_ICCV}.  Other saliency map methods \cite{Layerwise_relevance_prop2015,guided_backprop_2015, Integrated_gradient_2017_sundararajan, Selvaraju2019GradCAMVE} assign a relevance score to each pixel, allowing to order pixels by relevance. Such methods need to be quantitatively validated post-hoc with metrics such as the area over the perturbation curve \cite{Arras2017ExplainingRN} or the pointing game \cite{Fong_2019_ICCV}. It is challenging to meaningfully compare ShearletX on such metrics, since (1) we cannot order the features in ShearletX by relevance due to the binary nature of masks and (2) in ShearletX the mask is in shearlet space and not in pixel space. Nevertheless, in Supplementary Material \ref{appendix: quantitative comparison} we add a quantitative comparison. In Figure \ref{fig:general comparison}, we compare ShearletX qualitatively to pixel attribution methods to demonstrate the insights ShearletX can give that more heuristic pixel attribution methods cannot.

\section{Conclusion}\label{sec:conclusion}
We presented ShearletX, a novel mask explanation method, and two explanation metrics (hallucination score and conciseness-preciseness score) to evaluate mask explanations. ShearletX is more effective than other methods at separating fine-detail patterns, which are relevant for the classifier, from nearby nuisance patterns, that do not affect the classifier. Our theoretical results and experiments show ShearletX is well-protected against explanation artifacts and delivers superior explanations than previous mask explanation methods. Our examples illustrate cases when ShearletX can meaningfully interpret classifications but pixel attribution methods cannot.
In the future, we will focus on improving the runtime of ShearletX, which is currently slower than other mask explanation methods, to provide real-time explanations of excellent quality.

{\small
\bibliographystyle{ieee_fullname}
\bibliography{egbib}
}

\newpage

$ $
\newpage

\appendix

\section{Theory}\label{Appendix Theory}
In this section, we give the supplementary material for our theoretical results. We formally proof Theorem \ref{thm:shearletx} and Theorem \ref{thm:waveletx} from the main paper that show masking in the shearlet or wavelet space cannot create artificial edges for continuous images.

\subsection{ShearletX Theoretical Result}
We begin by recounting the definition of the wavefront set, which is a good model edges  for continuous images, particularly when working with shearlets.
\begin{definition}{\cite[Section 8.1]{AnLinPDOHoermander}}\label{def:WaveFrontSet}
Let $f \in L^2(\R^2)$ and $k \in \N$. A point $(x, \lambda)\in \R^2 \times \mathbb{S}^1$ is a \emph{$k$-regular directed point of $f$} if there exist open neighbourhoods $U_x$ and $V_\lambda$ of $x$ and $\lambda$ respectively and a smooth function $\phi\in C^\infty(\R^2)$ with $\supp \phi \subset U_x$ and $\phi(x) = 1$ such that
\begin{equation*}
    \bigl| \widehat{\phi f} (\xi) \bigr|
    \leq C_k \bigl( 1+|\xi| \bigr)^{-k}
    \quad \text{$\forall$ $\xi \in \R^2 \setminus \{0\}$ s.t. $\xi /|\xi| \in V_\lambda$}
\end{equation*}
holds for some $C_k >0$. The \emph{$k$-wavefront set} $\textrm{WF}_k(f)$ is the complement of the set of all $k$-regular directed points and the \emph{wavefront set} $\textrm{WF}(f)$ is defined as
\[
\textrm{WF}(f) \coloneqq \bigcup_{k \in \N} \textrm{WF}_k(f),
\]
\end{definition}
The wavefront set is completely determined by the decay properties of the shearlet transform, which is formalized in the following Theorem from \cite{grohstheorem_wavefrontset}.
\begin{theorem}[Theorem 2 in \cite{grohstheorem_wavefrontset}]
Let $\psi$ be a Schwartz function with infinitely many vanishing moments in $x_1$-direction. Let $f$ be a tempered distribution and $D=D_1\cup D_2$, where \begin{align*}
    &D_1 = \{(t_0,s_0) \in\R^2 \times [-1,1]:\; \textrm{for}\\
    &\textrm{$(s,t)$ in a neighborhood $U$ of $(s_0,t_0)$,}\\
    &\textrm{$|\mathcal{SH}_\psi(f)(a,s,t)|= O(a^k)$, for all $k\in\N$,}\\
    &\textrm{with the implied constant uniform over $U$})\}
\end{align*}
and
\begin{align*}
    &D_2 = \{(t_0,s_0) \in\R^2 \times (1,\infty]:\; \textrm{for}\\
    &\textrm{$(1/s,t)$ in a neighborhood $U$ of $(s_0,t_0)$,}\\
    &\textrm{$|\mathcal{SH}_\psi(f)(a,s,t)|= O(a^k)$, for all $k\in\N$,}\\
    &\textrm{with the implied constant uniform over $U$})\}.
\end{align*}
Then 
\begin{equation}
    \textrm{WF}(f)^c = D.
\end{equation}
\end{theorem}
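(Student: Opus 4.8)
The plan is to prove the two inclusions $D \subseteq \WF(f)^c$ and $\WF(f)^c \subseteq D$ separately, working throughout in the Fourier domain, where both the directional Fourier decay defining the wavefront set and the frequency concentration of the shearlets become transparent. Using Plancherel together with the dilation/shear law for the Fourier transform, I would first record the identity
\begin{equation*}
    \mathcal{SH}_\psi(f)(a,s,t) = a^{3/4}\int_{\R^2}\widehat{f}(\xi)\,\overline{\widehat{\psi}(A_a S_s^T\xi)}\,e^{2\pi i\sprod{t,\xi}}\,d\xi,
\end{equation*}
so that the essential frequency support of the integrand is the wedge $\{\xi : A_a S_s^T\xi \in \supp\widehat{\psi}\}$. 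Since $\psi$ is Schwartz with infinitely many vanishing moments in the $x_1$-direction, $\widehat{\psi}$ is concentrated on a region of the form $\{\,|\zeta_1|\asymp 1,\ |\zeta_2/\zeta_1|\lesssim 1\,\}$ and vanishes to infinite order on the $\zeta_2$-axis. Substituting $\zeta = A_a S_s^T\xi = (a\xi_1,\sqrt a\,(s\xi_1+\xi_2))$ shows that this wedge selects exactly the frequencies with $|\xi_1|\asymp a^{-1}$ and slope $\xi_2/\xi_1 = -s + O(\sqrt a)$; hence, as $a\to 0$, the shear $s$ probes the frequency codirection $\lambda_s \propto (1,-s)$. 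This dictionary between the shear $s$ and the direction $\lambda_s$ is the backbone of both inclusions.

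For $\WF(f)^c \subseteq D$ (regularity implies rapid shearlet decay) I would fix a $k$-regular directed point $(t_0,\lambda_{s_0})$ with localizing window $\phi$ and cone $V_{\lambda_{s_0}}$ on which $\widehat{\phi f}$ decays like $(1+|\xi|)^{-k}$. The first step is to replace $f$ by $\phi f$ inside the coefficient: because $\psi_{a,s,t}$ is spatially concentrated near $t$ at scale $a$ and $\phi\equiv 1$ near $t_0$, the error $\sprod{f,(1-\phi)\psi_{a,s,t}}$ is $O(a^\infty)$ for $t$ in a neighborhood of $t_0$, by the rapid spatial decay of $\psi$. I would then split the frequency integral into the part with $\xi/|\xi|\in V_{\lambda_{s_0}}$, where the hypothesized decay of $\widehat{\phi f}$ together with $|\xi|\asymp a^{-1}$ supplies the requisite power of $a$, and the complementary part, where $\xi/|\xi|\notin V_{\lambda_{s_0}}$ forces $|s+\xi_2/\xi_1|\gtrsim 1$, so that $A_a S_s^T\xi$ leaves the essential support of $\widehat{\psi}$ and the Schwartz decay plus vanishing moments yield $O(a^\infty)$. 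Tracking the prefactor $a^{3/4}$ and the wedge volume, and keeping the implied constants uniform over a neighborhood $U$ of $(s_0,t_0)$, produces the bound $O(a^k)$ for every $k$, which is membership in $D$.

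The reverse inclusion $D \subseteq \WF(f)^c$ (rapid shearlet decay implies regularity) is the main obstacle and demands a reconstruction argument. Starting from an admissible shearlet so that a Calder\'on-type reproducing formula $f = \int \mathcal{SH}_\psi(f)(a,s,t)\,\psi_{a,s,t}\,d\mu(a,s,t)$ holds, I would multiply by a genuine smooth compactly supported cutoff $\phi$ with $\phi(t_0)=1$ and analyze $\widehat{\phi f}(\xi)$ for $\xi/|\xi|$ in a small cone around $\lambda_{s_0}$. Splitting the reproducing integral according to whether $(s,t)$ lies in the good neighborhood $U$ (where $\mathcal{SH}_\psi(f)=O(a^\infty)$ by hypothesis) controls that contribution directly; the contribution from the complement must be shown harmless on the chosen frequency cone using the shear-to-direction dictionary again, since shears $s$ away from $s_0$ feed Fourier directions away from $\lambda_{s_0}$. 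The delicate points are that the fixed cutoff $\phi$ and the shrinking spatial localization of the shearlet live on different scales, so one must commute them with a controlled cross-scale error, and that the coarse-scale part of the reproducing formula (with $a$ bounded away from $0$) must be absorbed into a smooth, rapidly Fourier-decaying remainder. Obtaining these commutator estimates \emph{uniformly} over $U$, and correctly handling the borderline direction, is where the real work lies.

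Finally, I would treat the decomposition $D = D_1\cup D_2$: the single shearlet $\psi$ with shear $s_0\in[-1,1]$ resolves codirections inside the cone $|\xi_2/\xi_1|\le 1$, while the reparametrization by $1/s$ with $s_0\in(1,\infty]$ (including the limiting vertical direction $s=\infty$) extends the dictionary to the complementary cone after interchanging the roles of $\xi_1$ and $\xi_2$; the two cases are then proved by the identical Fourier-wedge analysis. The principal difficulty of the whole argument remains the cross-scale localization estimate in the reverse inclusion.
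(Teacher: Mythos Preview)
The paper does not supply a proof of this statement at all: it is quoted verbatim as ``Theorem 2 in \cite{grohstheorem_wavefrontset}'' and then used as a black box to derive the authors' own Theorem~\ref{thm:shearletx appendix} (that masking shearlet coefficients cannot enlarge the wavefront set). So there is no proof in the paper to compare your proposal against.

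That said, your outline is the standard microlocal route and is essentially the strategy in Grohs's original reference: pass to the Fourier side via Plancherel, read off the wedge-localization $\{\zeta=A_aS_s^T\xi\in\supp\widehat\psi\}\Leftrightarrow\{|\xi_1|\asymp a^{-1},\ \xi_2/\xi_1=-s+O(\sqrt a)\}$, obtain $\WF(f)^c\subseteq D$ by localizing with $\phi$ and splitting the frequency integral into the cone $V_{\lambda_{s_0}}$ and its complement, and obtain $D\subseteq\WF(f)^c$ from a Calder\'on reproducing formula split over the good neighborhood $U$ and its complement. The points you flag as delicate---the cross-scale commutator between the fixed cutoff $\phi$ and the shrinking shearlet, uniformity over $U$, the coarse-scale remainder, and the cone handover for $D_1$ versus $D_2$---are exactly the technical work in the cited paper. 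As a plan this is correct; just be aware that in this paper the result is imported, not re-proved.
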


For the following theorem, we model the edges in the image $x$ by the wavefront set $\textrm{WF}(x)$.

\begin{theorem}\label{thm:shearletx appendix}
Let $x\in L^2[0,1]^2$ be an image modeled as a $L^2$-function.  Let $m$ be a mask on the shearlet coefficients of $x$ and let $\hat x$ be the image $x$ masked in shearlet space with $m$. Then, we have  $\textrm{WF}(\hat x)\subset \textrm{WF}(x)$ and thus masking in shearlet space did not create new edges.
\end{theorem}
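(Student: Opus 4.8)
The plan is to reduce the claim to an elementary pointwise estimate by invoking the characterization of the wavefront set through shearlet decay (Theorem~2 of \cite{grohstheorem_wavefrontset}, stated above). It suffices to prove the inclusion of complements $\textrm{WF}(x)^c \subseteq \textrm{WF}(\hat x)^c$, i.e.\ that every regular directed point of $x$ remains regular for $\hat x$. So I would fix a point--direction pair in $\textrm{WF}(x)^c$; by the cited theorem this corresponds to a pair $(t_0,s_0) \in D$ computed from $x$, which means there is a neighborhood $U$ of $(s_0,t_0)$ such that $|\mathcal{SH}_\psi(x)(a,s,t)| = O(a^k)$ as $a\to 0$ for every $k \in \N$, with the implied constants uniform over $U$. (The set $D$ consists of two pieces corresponding to the two cone regimes $s\in[-1,1]$ and $s\in(1,\infty]$, i.e.\ to $\iota = 1$ and $\iota = -1$; the argument below is identical in both.)

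Next I would record the one-line monotonicity observation. Writing $M \coloneqq \|m\|_\infty < \infty$ for the bounded mask, the masked shearlet data obeys $|m(a,s,t,\iota)\,\mathcal{SH}_\psi(x)(a,s,t,\iota)| \le M\,|\mathcal{SH}_\psi(x)(a,s,t,\iota)|$ pointwise. Hence on the same neighborhood $U$ the masked coefficients still decay like $O(a^k)$ for every $k$, with the implied constant merely multiplied by $M$ and still uniform over $U$. Provided the shearlet coefficients of $\hat x$ agree with (or are dominated by) the masked data $m\cdot\mathcal{SH}_\psi(x)$, this places $(t_0,s_0)$ in the set $D$ computed from $\hat x$, i.e.\ the chosen point--direction pair lies in $\textrm{WF}(\hat x)^c$. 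Since the pair was arbitrary, taking complements yields $\textrm{WF}(\hat x) \subseteq \textrm{WF}(x)$, which is the assertion. Intuitively: masking can only accelerate decay, never slow it down, so it cannot turn a regular direction into a singular one.

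The step I expect to be the main obstacle is the one flagged parenthetically above: making precise the sense in which $\mathcal{SH}_\psi(\hat x) = m\cdot\mathcal{SH}_\psi(x)$. In the digital setting used in practice the shearlet transform is a genuine invertible linear map, so reconstructing $\hat x$ from the masked coefficients and re-analyzing returns exactly those coefficients, and the identity holds on the nose. For the continuous statement one must be more careful, because $\mathcal{SH}_\psi$ is not onto $L^\infty$ and the composition ``synthesize then analyze'' is the reproducing-kernel (Gram) projection $P$ onto the range of $\mathcal{SH}_\psi$, a nonlocal operator that could in principle spread decay from far away. I would resolve this either (a) by using the rapid off-diagonal decay of the shearlet reproducing kernel $\langle \psi_{a,s,t,\iota},\psi_{a',s',t',\iota'}\rangle$ in a Schur-type estimate: split the kernel integral defining $P(m\cdot\mathcal{SH}_\psi x)$ into a near-diagonal part, where the uniform $O(a^k)$ bound on the coefficients of $x$ applies directly, and a far part, controlled by kernel decay together with $\|x\|_{L^2}$, to conclude $P$ preserves the local uniform $O(a^k)$ estimates on $U$; or (b) by simply \emph{defining} $\hat x$ as the synthesis $\int m(a,s,t,\iota)\,\mathcal{SH}_\psi(x)(a,s,t,\iota)\,\psi_{a,s,t,\iota}\,d\mu$, the continuous counterpart of the digital construction, so that the masked data is by definition a legitimate shearlet representation of $\hat x$. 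Everything else is the short monotonicity-of-decay argument of the preceding paragraph.
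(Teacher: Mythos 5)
Your argument is essentially the paper's proof: it reduces to the inclusion of complements, invokes the shearlet-decay characterization of the wavefront set on a neighborhood, and observes that multiplication by a bounded mask can only improve the uniform $O(a^k)$ decay, treating both cone regimes symmetrically. The subtlety you flag in your final paragraph is simply sidestepped in the paper, which asserts $\mathcal{SH}(\hat x)(a,s,t)=m(a,s,t)\cdot\mathcal{SH}(x)(a,s,t)$ as holding by definition of $\hat x$ together with invertibility of the shearlet transform, so your discussion of the reproducing-kernel projection is additional care beyond (indeed, more careful than) the published argument.
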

\begin{proof}
Note that the shearlet transform is invertible. Hence, we have
by definition of $\hat x$ 
\begin{align}
    \mathcal{SH}(\hat x)(a,s,t) = \mathcal{SH}(x)(a,s,t)\cdot m(a,s,t).\label{proof:equation 1}
\end{align}
To show $\textrm{WF}(\hat x)\subset \textrm{WF}(x)$, it suffices to show $\textrm{WF}^c(\hat x)\supset \textrm{WF}^c(x)$. Let $(t,s)\in  \textrm{WF}^c(x)$ be arbitrary with $|s|<1$. Then, by definition of the wavefront set, we have for all $N\in\N$
\begin{equation}
    |\mathcal{SH}(x)(a,s,t)| = O(a^N)
\end{equation}
for $a\to0$.
Since $m(a,s,t)\in[0,1]$, we also have for all $N\in\N$
\begin{align}
    &|\mathcal{SH}(\hat x)(a,s,t)| = |\mathcal{SH}(x)(a,s,t)|\cdot |m(a,s,t)|\\
    &\leq |\mathcal{SH}(x)(a,s,t)|  = O(a^N).
\end{align}
This implies $(t,s)\in \textrm{WF}^c(\hat x)$. Thus, we showed the claim $\textrm{WF}^c(\hat x)\supset \textrm{WF}^c(x)$.
\end{proof}

\subsection{WaveletX Theoretical Result}

When analyzing WaveletX, we opt to model singularities via local Lipschitz regularity (see Definition \ref{def: lipschitz regularity}) instead of using the wavefront set approach. This approach is preferable since the Lipschitz regularity of a function is completely characterized by the rate of decay of its wavelet coefficients, as the scale goes to zero (see Theorem \ref{thm:wavelet decay}).

\begin{definition}[Lipschitz Regularity]\label{def: lipschitz regularity}
A function $f:\R^2\to \R$ is \emph{uniformly Lipschitz} $\alpha\geq 0$ over a domain $\Omega\subset \R^2$ if there exists $K>0$, such that for any $v\in \Omega$ one can find a polynomial $p_v$ of degree $\lfloor\alpha \rfloor$ such that
\begin{align}
    \forall x\in\Omega,\;\; |f(x)-p_v(x)|\leq K|x-v|^{\alpha}.
\end{align}
The \emph{Lipschitz regularity} of $f$ over $\Omega$ is the supermum over all $\alpha$, for which $f$ is uniformly Lipschitz $\alpha$ over $\Omega$.
The infimum of K, which satisfies the above equation, is the \emph{homogenous Hölder} $\alpha$ norm $\|f\|_{\tilde C^\alpha}$. 
\end{definition}

\begin{theorem}[Theorem 9.15 \cite{mallat_sparse_edition}]\label{thm:wavelet decay}

Let $x\in L^2[0,1]^2$ be a continuous image with Lipschitz regularity $\alpha\geq 0$. Then there exist $B\geq A>0$ such that for all $J\in\Z$ we have
\begin{align}
    A \|x\|_{\tilde C^\alpha} \leq \sup_{1\leq l\leq 3, j\leq J, 2^jn\in[0,1)^2} \frac{|\langle x, \psi^l_{j,n}\rangle|}{ 2^{j(\alpha+1)}} \leq B\|x\|_{\tilde C^\alpha}.\nonumber
\end{align}
\end{theorem}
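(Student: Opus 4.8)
\emph{Proof proposal.} The statement is a two-sided norm equivalence between the homogeneous Hölder norm $\|x\|_{\tilde C^\alpha}$ and the normalized wavelet-coefficient maximum, so the plan is to prove the right-hand inequality (regularity controls decay) and the left-hand inequality (decay controls regularity) separately. Throughout I assume the standing hypothesis behind this classical result: each mother wavelet $\psi^l$ is smooth with rapid decay and has at least $\lfloor\alpha\rfloor+1$ vanishing moments.

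For the upper bound $\sup_{l,j,n}2^{-j(\alpha+1)}|\langle x,\psi^l_{j,n}\rangle|\leq B\|x\|_{\tilde C^\alpha}$, I would fix a coefficient, set $v=2^jn$ (the center of $\supp\psi^l_{j,n}$), and let $p_v$ be the degree-$\lfloor\alpha\rfloor$ polynomial witnessing uniform Lipschitz regularity at $v$. The vanishing moments give $\langle p_v,\psi^l_{j,n}\rangle=0$, so $\langle x,\psi^l_{j,n}\rangle=\langle x-p_v,\psi^l_{j,n}\rangle$. On $\supp\psi^l_{j,n}$ one has $|u-v|=O(2^j)$, and a change of variables gives $\|\psi^l_{j,n}\|_{L^1}=2^j\|\psi^l\|_{L^1}$; combining these with the Lipschitz bound $|x(u)-p_v(u)|\leq K|u-v|^\alpha$ yields $|\langle x,\psi^l_{j,n}\rangle|\lesssim K\,2^{j\alpha}\cdot 2^j=K\,2^{j(\alpha+1)}$ (for rapidly decaying $\psi^l$ the compact-support estimate is replaced by the finite weighted integral $\int|y|^\alpha|\psi^l(y)|\,dy$). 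Dividing by $2^{j(\alpha+1)}$ and passing to the infimum over admissible $K$ produces the bound with $B$ proportional to $\max_l\|\psi^l\|_{L^1}$.

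For the lower bound I would let $M$ denote the coefficient maximum and reconstruct $x=\sum_{j\leq J}d_j+P_Jx$, where $d_j=\sum_{n,l}\langle x,\psi^l_{j,n}\rangle\psi^l_{j,n}$ is the scale-$j$ detail and $P_Jx$ is the smooth scale-$2^J$ approximation. Using $|\langle x,\psi^l_{j,n}\rangle|\leq M2^{j(\alpha+1)}$, the bounded overlap of the $\psi^l_{j,n}$ at each fixed scale $j$, and $\|\partial^\beta\psi^l_{j,n}\|_\infty=2^{-j(1+|\beta|)}\|\partial^\beta\psi^l\|_\infty$, I get the scalewise estimates $\|\partial^\beta d_j\|_\infty\lesssim M\,2^{j(\alpha-|\beta|)}$ for every multi-index $\beta$. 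To verify the Lipschitz condition at points $u,v$ with $h=|u-v|$, I would pick $j_0$ with $2^{j_0}\sim h$ and split the series at $j_0$: the fine scales sum to $|\sum_{j\leq j_0}d_j(u)|\lesssim M\sum_{j\leq j_0}2^{j\alpha}\sim Mh^\alpha$ (geometric summation, using $\alpha>0$), while for the coarse part $\sum_{j>j_0}d_j+P_Jx$ I take $p_v$ to be its degree-$\lfloor\alpha\rfloor$ Taylor polynomial at $v$ and bound the remainder by the $(\lfloor\alpha\rfloor+1)$-th derivative estimate, obtaining $\lesssim M\,2^{j_0(\alpha-\lfloor\alpha\rfloor-1)}h^{\lfloor\alpha\rfloor+1}\sim Mh^\alpha$. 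Adding both contributions gives $|x(u)-p_v(u)|\lesssim Mh^\alpha$, hence $\|x\|_{\tilde C^\alpha}\lesssim M$.

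The hard part will be the lower bound at \emph{integer} $\alpha$: both the fine-scale and coarse-scale geometric series are then borderline, and the uniform Lipschitz characterization at integer exponents corresponds to the Zygmund class rather than classical $C^\alpha$, so the naive Taylor-remainder bound must be replaced by a second-difference (Zygmund-type) estimate. Secondary technical points to settle are the convergence and termwise differentiability of the wavelet series (justified by the decay and smoothness of $\psi^l$) and the role of the coarse approximation term $P_Jx$, which is smooth at scale $2^J$ and therefore affects only lower-order behavior that can be absorbed into $p_v$.
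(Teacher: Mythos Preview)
The paper does not give its own proof of this statement: it is quoted verbatim as Theorem 9.15 from Mallat's textbook and used as a black box (to derive the corollary on the equivalence between uniform Lipschitz~$\alpha$ and the coefficient bound, and then Theorem~\ref{theorem: waveletx formal version}). So there is no ``paper's proof'' to compare against.

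That said, your sketch is precisely the classical argument one finds in Mallat. The upper bound via vanishing moments and the pointwise Lipschitz estimate $|x(u)-p_v(u)|\leq K|u-v|^\alpha$ is correct, and your computation of the $L^1$ scaling $\|\psi^l_{j,n}\|_{L^1}\sim 2^j$ gives the right exponent $2^{j(\alpha+1)}$. For the lower bound, the scale-splitting at $2^{j_0}\sim|u-v|$ with geometric summation on fine scales and a Taylor remainder on coarse scales is exactly the standard mechanism. You are also right to flag the integer-$\alpha$ case as the genuinely delicate point: there the characterization is of Zygmund type and the Taylor-remainder step has to be replaced by a second-difference argument. One minor remark: the statement as written in the paper asserts the two-sided bound holds \emph{for all} $J\in\Z$, with the supremum over $j\leq J$; your argument handles this automatically since both bounds are monotone in $J$ (the supremum only grows as $J\to\infty$, and the lower bound is witnessed at arbitrarily fine scales), but it is worth saying so explicitly.
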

In our Theorem \ref{thm:wavelet decay}, we will heavily rely on the connection between Lipschitz regularity and wavelet decay that is formalized in Theorem \ref{thm:wavelet decay}. As preparation for our result, we first give a corollary to Theorem \ref{thm:wavelet decay} that shows a function is uniformly Lipschitz $\alpha$ if and only if the wavelet coefficients decay faster than $\mathcal{O}(2^{j(\alpha+1)})$ for $j\to-\infty$.
\begin{corollary}\label{corollary to mallats theorem}
Let $a,b\in\R$ with $a<b$ and consider a continuous image $x\in L^2[a,b]^2$ with square domain $[a,b]^2$.
Then the following two statements are equivalent:
\begin{enumerate}
    \item The function $x$ is uniformly Lipschitz $\alpha$.
    \item There exists a constant $C>0$ such that for all $J\in\Z$ with $J\leq 0$, we have
\begin{equation}
    \sup_{1\leq l\leq 3, j\leq J, 2^jn\in[a,b)^2} \frac{|\langle x, \psi^l_{j,n}\rangle|}{ 2^{j(\alpha+1)}} \leq C.\nonumber
\end{equation}
\end{enumerate}
\end{corollary}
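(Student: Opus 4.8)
The plan is to read the corollary off from Mallat's wavelet characterization of Lipschitz regularity, Theorem~\ref{thm:wavelet decay}, after two elementary reductions. Throughout, recall that by Definition~\ref{def: lipschitz regularity} the statement ``$x$ is uniformly Lipschitz $\alpha$ over $[a,b]^2$'' is literally the statement ``$\|x\|_{\tilde C^\alpha}<\infty$''. Mallat's theorem, in its full form an equivalence of which the displayed two-sided estimate is the quantitative shape, asserts that for a continuous (hence bounded) function on a square and a mother wavelet with more than $\alpha$ vanishing moments, $\|x\|_{\tilde C^\alpha}<\infty$ holds if and only if the normalized wavelet supremum
\[
  S_J(x)\coloneqq \sup_{1\le l\le 3,\ j\le J,\ 2^jn\in[a,b)^2}\frac{|\langle x,\psi^l_{j,n}\rangle|}{2^{j(\alpha+1)}}
\]
is finite, in which case $A\|x\|_{\tilde C^\alpha}\le S_J(x)\le B\|x\|_{\tilde C^\alpha}$ with constants $A,B$ depending only on the square and the wavelet, uniformly in $J$.

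The first reduction removes the quantifier over $J$ in statement (2): if $J_1\le J_2$ then $\{j\le J_1\}\subseteq\{j\le J_2\}$, so $S_J(x)$ is nondecreasing in $J$, and hence ``$\exists C$ with $S_J(x)\le C$ for all $J\le 0$'' is simply ``$S_0(x)<\infty$''. The second reduction passes from the unit square $[0,1]^2$, over which Theorem~\ref{thm:wavelet decay} is stated in the excerpt, to the general square $[a,b]^2$: either one invokes Mallat's theorem directly in its general-square form (the constants $A,B$ then carry a harmless dependence on $b-a$), or one conjugates by the affine map $u\mapsto a+(b-a)u$, which turns $x$ into a function $y$ on $[0,1]^2$, takes uniform Lipschitz $\alpha$ regularity to the same with $\|y\|_{\tilde C^\alpha}$ comparable to $(b-a)^\alpha\|x\|_{\tilde C^\alpha}$, and carries the dyadic wavelet coefficients of $x$ supported near $[a,b)^2$ to those of $y$ supported near $[0,1)^2$ up to a fixed shift of the scale index $j$ and a bounded normalization factor.

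With these in hand the two implications are immediate. For $(1)\Rightarrow(2)$: if $x$ is uniformly Lipschitz $\alpha$ then $\|x\|_{\tilde C^\alpha}<\infty$, and the upper half of Theorem~\ref{thm:wavelet decay} over $[a,b]^2$ gives $S_J(x)\le B\|x\|_{\tilde C^\alpha}$ for every $J\le 0$, so we may take $C\coloneqq B\|x\|_{\tilde C^\alpha}$. For $(2)\Rightarrow(1)$: by the first reduction $S_0(x)\le C<\infty$, and the converse half of Mallat's theorem over $[a,b]^2$ — the implication that $O(2^{j(\alpha+1)})$ decay of the wavelet coefficients forces uniform Lipschitz $\alpha$ regularity — yields that $x$ is uniformly Lipschitz $\alpha$ (quantitatively, $A\|x\|_{\tilde C^\alpha}\le S_0(x)<\infty$).

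The only step with real content is the affine rescaling used in the second reduction: one has to check that $u\mapsto a+(b-a)u$ transforms the homogeneous Hölder norm $\|\cdot\|_{\tilde C^\alpha}$ by a clean power of $b-a$ (immediate from Definition~\ref{def: lipschitz regularity}, comparing the polynomials $p_v$) and that it identifies the relevant wavelet coefficients of $x$ and $y$ up to bounded factors and a reindexing of scales, which is routine but is where all the care is needed; if instead one is willing to quote Mallat's Theorem~9.15 over an arbitrary square, the corollary is nothing more than the two reductions plus the two one-line implications above.
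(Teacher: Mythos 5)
Your two reductions (monotonicity of the supremum in $J$, and the affine rescaling from $[0,1]^2$ to $[a,b]^2$) are fine, and the rescaling remark matches the paper, which also disposes of the general square by a one-line scaling argument. The problem is where you declare the rest to be "two one-line implications": you get them only by quoting Mallat's result in an "iff at every exponent $\alpha$" form that is strictly stronger than what Theorem \ref{thm:wavelet decay} as stated in the paper provides. That theorem is a two-sided estimate $A\|x\|_{\tilde C^\alpha}\leq \sup(\cdot)\leq B\|x\|_{\tilde C^\alpha}$ under the hypothesis that $\alpha$ \emph{is} the Lipschitz regularity of $x$, i.e.\ the critical exponent. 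The corollary's entire content is the bridge from that statement to the equivalence at a prescribed $\alpha$, and your proposal assumes that bridge rather than building it. (It is also not a harmless strengthening: an unconditional "decay at rate $2^{j(\alpha+1)}$ iff uniformly Lipschitz $\alpha$" is delicate at integer $\alpha$, where such decay classically characterizes a Zygmund-type class, so it cannot simply be asserted as the "full form" of the citation.)

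Concretely, two steps are missing. For $(1)\Rightarrow(2)$, if $x$ is uniformly Lipschitz $\alpha$ its regularity $\alpha^*$ may exceed $\alpha$, so the upper bound of Theorem \ref{thm:wavelet decay} is only available at exponent $\alpha^*$; the paper converts it to exponent $\alpha$ by writing $2^{-j(\alpha+1)}=2^{-j(\alpha^*+1)}2^{j(\alpha^*-\alpha)}$ and using $2^{j(\alpha^*-\alpha)}\leq 2^{J(\alpha^*-\alpha)}\leq 1$ for $j\leq J\leq 0$ — this is exactly where the hypothesis $J\leq 0$ enters, whereas in your argument it plays no role, which is a sign the exponent mismatch was skipped. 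For $(2)\Rightarrow(1)$, the "converse half" you invoke (coefficient decay at rate $\alpha$ forces uniform Lipschitz $\alpha$) is precisely the statement to be proven; Theorem \ref{thm:wavelet decay} only gives the lower bound $A\|x\|_{\tilde C^\beta}\leq\sup(\cdot)/1$ at the true regularity $\beta$, which a priori could be smaller than $\alpha$. The paper closes this by contradiction: if $\beta<\alpha$, combining the lower bound at $\beta$ with the assumed bound $C2^{j(\alpha+1)}$ along a sequence $j_k\to-\infty$ yields $\tfrac{A}{2}\|x\|_{\tilde C^\beta}2^{j_k(\beta+1)}\leq C2^{j_k(\alpha+1)}$, impossible for $j_k$ sufficiently negative. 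Your proposal needs either this argument or an explicit, correctly stated citation of the stronger characterization; as written, the converse direction is circular relative to the tools the paper supplies.
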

\begin{proof}
We prove the Corollary for $[a,b]^2=[0,1]^2$ and the general case follows simply with a scaling argument. First, we prove (1) implies (2). Suppose the function $x\in L^2[0,1]^2$ is uniformly Lipschitz $\alpha$. Then $x$ has Lipschitz regularity $\alpha^*\geq \alpha$. By Theorem \ref{thm:wavelet decay}, we then obtain a constant $\tilde B>0$ such that for all $J\in Z$
\begin{align}
    \sup_{1\leq l\leq 3, j\leq J, 2^jn\in[0,1)^2} \frac{|\langle x, \psi^l_{j,n}\rangle|}{ 2^{j(\alpha^* +1)}} \leq \tilde B\|x\|_{\tilde C^{\alpha^*}}.\nonumber
\end{align}
We then have 
\begin{align*}
    &\sup_{1\leq l\leq 3, j\leq J, 2^jn\in[0,1)^2} \frac{|\langle x, \psi^l_{j,n}\rangle|}{ 2^{j(\alpha +1)}} \\
    &=\sup_{1\leq l\leq 3, j\leq J, 2^jn\in[0,1)^2} \frac{|\langle x, \psi^l_{j,n}\rangle|}{ 2^{j(\alpha^* +1)}} 2^{j(\alpha^* -\alpha)}\\
    &\leq \tilde B \|x\|_{\tilde C^{\alpha^*}}\sup_{j\leq J}2^{j(\alpha^*-\alpha)}\\
    &=  \tilde B \|x\|_{\tilde C^{\alpha^*}}2^{J(\alpha^*-\alpha)}\\
    &\leq \tilde B \|x\|_{\tilde C^{\alpha^*}}.
\end{align*}
By setting $C\coloneqq \tilde B \|x\|_{\tilde C^{\alpha^*}}$, we have shown (1) implies (2). 
Next, we show that (2) implies (1). Suppose
there exists a constant $C$ such that for all $J\in\Z$ with $J<0$, we have
\begin{equation}
    \sup_{1\leq l\leq 3, j\leq J, 2^jn\in[a,b)^2} \frac{|\langle x, \psi^l_{j,n}\rangle|}{ 2^{j(\alpha+1)}} \leq C.\label{upper bound 1 corrolary}
\end{equation}
We prove $x$ is uniformly Lipschitz $\alpha$ by contradiction. Supposed $x$ has Lipschitz regularity $\beta$ with $0\leq \beta<\alpha$.
We then have by Theorem \ref{thm:wavelet decay} that there exists constants $A>0$ such that for all $J\in\Z$
\begin{align}
    A \|x\|_{\tilde C^\beta} \leq \sup_{1\leq l\leq 3, j\leq J, 2^jn\in[0,1)^2} \frac{|\langle x, \psi^l_{j,n}\rangle|}{ 2^{j(\beta+1)}}. \label{lower bound 1 corrolary}
\end{align}
By taking $J\to -\infty$  in (\ref{lower bound 1 corrolary}), we obtain a sequence of $(j_k)_{k\in\N}$ with $j_k\rightarrow-\infty$ satisfying
\begin{align}
     \frac{A}{2} \|x\|_{\tilde C^\beta}2^{j_k(\beta+1)} \leq |\langle x, \psi^l_{j_k,n}\rangle| \leq C 2^{j_k(\alpha+1)},\label{corollary contradiction inequality}
\end{align}
for all $k\in\N$. 
But this is a contradiction, since for large enough $k\in\N$, $j_k$ is so negative that the upper bound in (\ref{corollary contradiction inequality}) is strictly smaller then the lower bound in (\ref{corollary contradiction inequality}). Thus, $x$ must be uniformly Lipschitz $\alpha$, which finishes the proof.

\end{proof}
Next, we define a Lipschitz regular point of an image as a point for which the image has locally Lipschitz regularity $\alpha$ with $\alpha\geq 1$.
\begin{definition}[Lipschitz Regular Point]\label{def:lipschitz regular point}
Let $x\in L^2[0,1]^2$ be a continuous image. Let $g:\R\to\R$ be a smooth cutoff function satisfying the following properties:
\begin{enumerate}
    \item $\forall t\in\R:\; |t|\leq 1/2\Longrightarrow g(t)=1$
    \item $\forall t\in\R:\; |t|\geq 1 \Longrightarrow g(t)=0$
    \item $\forall t\in\R:\;  |g(t)|\leq 1$
\end{enumerate}
Define the 2d cutoff function $h:\R^2\to\R,$  $ h(t_1,t_2)\coloneqq g(t_1)g(t_2)$.
We say a point $t^* = (t^*_1, t^*_2)\in[0,1]^2$ is a regular point of $x$ if there exists  $0<a \leq 1$ 
such that the localized image $\tilde x: [0,1]^2\to \R$,
\begin{equation}
\tilde x(t_1,t_2)\coloneqq h((t_1-t_1^*)/a, (t_2-t_2^*)/a)\cdot x(t_1,t_2)    \nonumber
\end{equation}
has Lipschitz regularity $\alpha\geq 1$.
\end{definition}
A Lipschitz singular point is any point that is not a Lipschitz regular point. Lipschitz singular points model image elements such as edges and point singularities. 

\begin{theorem}\label{theorem: waveletx formal version}
Let $x\in L^2[0,1]^2$ be an image. Consider an orthonormal  wavelet basis that comprises compactly supported wavelets. Let $m$ be a bounded mask in wavelet space and denote by $\hat x$ the image $x$ masked in wavelet space with $m$. Then, every Lipschitz regular point $t^*$ of $x$ is also a Lipschitz regular point of $\hat x$.
\end{theorem}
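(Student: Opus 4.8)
The plan is to use the linearity of wavelet-space masking to split $\hat x$ into a globally regular part and a part that is flat near $t^*$. Since $t^*$ is a Lipschitz regular point of $x$, Definition~\ref{def:lipschitz regular point} supplies a scale $0<a\le 1$ for which $\tilde x := h_a\cdot x$, with $h_a(\cdot):=h((\cdot-t^*)/a)$, is uniformly Lipschitz $\alpha$ over $[0,1]^2$ for some $\alpha\ge 1$ (if the Lipschitz regularity is exactly $1$ and not attained, one runs the whole argument with an arbitrary exponent $\alpha'<1$; it still forces Lipschitz regularity $\ge 1$ at the end). Write $x=\tilde x+r$ with $r:=(1-h_a)x$, and note $r\equiv 0$ on the square $Q:=t^*+a[-1/2,1/2]^2$, since $h_a\equiv 1$ there. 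Because wavelet-space masking $z\mapsto\hat z$ is linear, $\hat x=\widehat{\tilde x}+\hat r$, so it suffices to show each summand is uniformly Lipschitz $1$ on a fixed neighbourhood of $t^*$.

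The first summand is handled by Corollary~\ref{corollary to mallats theorem}. Applying its direction (1)$\Rightarrow$(2) (with square $[0,1]^2$, exponent $\alpha$) to $\tilde x$ yields $C>0$ with $\abs{\sprod{\tilde x,\psi^l_{j,n}}}\le C\,2^{j(\alpha+1)}$ for all admissible indices. The detail coefficients of $\widehat{\tilde x}$ are the masked coefficients $m^l_{j,n}\sprod{\tilde x,\psi^l_{j,n}}$, so with $M:=\norm{m}_\infty<\infty$ they satisfy $\abs{\sprod{\widehat{\tilde x},\psi^l_{j,n}}}\le MC\,2^{j(\alpha+1)}$, and the converse direction (2)$\Rightarrow$(1) returns that $\widehat{\tilde x}$ is uniformly Lipschitz $\alpha\ge 1$ over all of $[0,1]^2$. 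This is in fact a self-contained lemma: a bounded mask cannot slow down wavelet decay, so masking preserves global uniform Lipschitz regularity.

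The second summand is the crux, and is where compact support of the wavelets is essential. The detail coefficients of $\hat r$ are $m^l_{j,n}\sprod{r,\psi^l_{j,n}}$, and $\sprod{r,\psi^l_{j,n}}=0$ whenever $\supp\psi^l_{j,n}\subseteq Q$, because $r$ vanishes on $Q$. Fix the smaller square $Q':=t^*+(a/4)[-1/2,1/2]^2$. Since $\supp\psi^l_{j,n}$ is a box of side proportional to $2^j$, there is a scale threshold $J_0$ (depending on $a$ and the size of $\supp\psi$) below which every wavelet whose support meets $Q'$ is already contained in $Q$, hence carries a vanishing $\hat r$-coefficient. Consequently, restricted to $Q'$, the $L^2$-convergent wavelet series of $\hat r$ collapses to the finitely many scaling functions $\phi_{0,n}$ and wavelets $\psi^l_{j,n}$ with $J_0<j\le 0$ whose supports meet $Q'$; assuming these basis functions are $C^1$ (as for sufficiently regular compactly supported wavelets such as high-order Daubechies wavelets — regularity that is in any case implicit in invoking Corollary~\ref{corollary to mallats theorem}), $\hat r$ is $C^1$ on $Q'$, hence uniformly Lipschitz $1$ there.

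Combining the two parts, $\hat x=\widehat{\tilde x}+\hat r$ is uniformly Lipschitz $1$ on $Q'$. Choosing $0<a'\le 1$ small enough that $\supp h_{a'}\subseteq Q'$, the localized image $h_{a'}\cdot\hat x$ is the product of the smooth compactly supported bump $h_{a'}$ with a function uniformly Lipschitz $1$ on a neighbourhood of $\supp h_{a'}$; such a product is compactly supported and uniformly Lipschitz $1$ on $[0,1]^2$, so it has Lipschitz regularity $\ge 1$, and Definition~\ref{def:lipschitz regular point} then gives that $t^*$ is a Lipschitz regular point of $\hat x$. The single genuine obstacle is the third step: showing that masking a function that is identically zero near $t^*$ cannot destroy smoothness near $t^*$. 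This rests entirely on compact support — at fine scales the relevant coefficients are untouched because they were already zero, so the mask can only perturb the finitely many coarse, smooth basis functions living near $t^*$.
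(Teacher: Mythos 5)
Your proof is correct (at the same level of rigor as the paper's own argument) but it is organized differently. The paper does not split the image: it bounds the masked coefficients $\sprod{\hat x,\psi^l_{j,n}}$ directly for the wavelets supported inside the square $S_a(t^*)$ (using that there $\sprod{x,\psi^l_{j,n}}=\sprod{\tilde x,\psi^l_{j,n}}$, which is literally your fact $\sprod{r,\psi^l_{j,n}}=0$), then introduces the fine-scale projection $P_{J_0}\hat x$, applies Corollary~\ref{corollary to mallats theorem} to that projection on the smaller square, and finally appeals to the statement that Lipschitz regularity is governed only by the fine-scale asymptotics to pass from $P_{J_0}\hat x$ back to $\hat x$. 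You instead exploit linearity of the masking operator to write $\hat x=\widehat{\tilde x}+\hat r$, control $\widehat{\tilde x}$ globally by running Corollary~\ref{corollary to mallats theorem} in both directions (bounded masks cannot slow wavelet decay), and dispose of $\hat r$ near $t^*$ by observing that its fine-scale coefficients there vanish, so that locally it is a finite sum of smooth coarse-scale basis functions. The two routes rest on the same three pillars — the decay characterization of Theorem~\ref{thm:wavelet decay}/Corollary~\ref{corollary to mallats theorem}, compact support of the wavelets to localize, and the cutoff construction of Definition~\ref{def:lipschitz regular point} at the end — but your architecture buys something: the coarse-scale contribution is made explicit as a finite $C^1$ sum, which replaces the paper's looser assertion that the projection tail "does not affect" regularity, and you also flag the case where the regularity $\alpha=1$ is a non-attained supremum, which the paper glosses over. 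The price is that you need global regularity of $\widehat{\tilde x}$ and slightly more smoothness of the wavelets than the bare statement advertises ($C^1$, compactly supported, enough vanishing moments); since Theorem~\ref{thm:wavelet decay} already presupposes such regularity implicitly, this is a hypothesis shared with the paper's proof rather than a gap in yours. The same goes for the known caveat that the converse wavelet characterization is delicate at integer exponents: both proofs lean on Corollary~\ref{corollary to mallats theorem} as stated, so this does not count against your argument.
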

\begin{proof}
Let $t^*=(t^*_1,t^*_2)\in[0,1]^2$ be a Lipschitz regular point of $x$. By definition, there exists $0<a\leq 1$, such that the localized image 
\begin{align}
    &\tilde x: [0,1]^2\to\R, \\
    &\tilde x(t_1,t_2)\coloneqq h((t_1-t_1^*)/a, (t_2-t_2^*)/a)\cdot x(t_1,t_2)
\end{align}
has Lipschitz regularity $\alpha\geq 1$, where  $h: \R^2\to\R$  is the smooth cutoff function from Definition \ref{def:lipschitz regular point}. By Theorem \ref{thm:wavelet decay}, there exists a constant $B>0$ such that for every $J\in\mathbb{Z}$
\begin{align}
    \sup_{1\leq l\leq 3, j\leq J, 2^jn\in[0,1)^2} \frac{|\langle \tilde  x, \psi^l_{j,n}\rangle|}{ 2^{j(\alpha+1)}} \leq B\|\tilde  x\|_{\tilde C^\alpha}.\label{eq: wavelet decay upper bound tilde x}
\end{align}
By definition of the smooth cutoff function $h$, we know that $h$ is equal to $1$ on the square $S_a(t^*)$ with side length $a$, centered at $t^*\in[0,1]^2$.
For each $j\in \Z$, we define the set
\begin{equation}
    \Omega_j \coloneqq \Big\{(n_1,n_2)\in\N^2: \supp\psi_{j,n}\subset S_a(t^*)\Big\}\nonumber
\end{equation}
of grid locations at scale $2^j$ where the wavelet support is  contained in $S_a(t^*)$. Note there exists a sufficiently small $J^*\in\Z$, such that $\Omega_j\neq \emptyset$, for all $j\leq J_0$. Fix such a $J^*\in\Z$. Then, for all $j\leq J^*$ and $n\in\Omega_j$, we have 
\begin{align}
\forall t\in \supp \psi_{j,n}^l:\; \tilde x(t) = x(t)\label{eq: supp of wavelets is subset}
\end{align}
because $\tilde x$ is obtained as $x$ times a  cutoff function, which is equal to 1 on the square $S_a(t^*) \supset \supp \psi_{j,n}^l\ni t$.
Since the wavelet system is assumed to be an orthonormal basis, the wavelet coefficients of $\hat{x}$ are equal to the masked wavelet coefficients of $x$, namely $\langle  \hat x, \psi^l_{j,n}\rangle=m_{j,n}\langle  x, \psi^l_{j,n}\rangle$, where $m_{j,n}$ is the mask entry for the wavelet coefficient with parameters $(j,n)$. 
We then have
\begin{align}
    &\sup_{1\leq l\leq 3, j\leq J^*, n\in\Omega_j} \frac{|\langle  \hat x, \psi^l_{j,n}\rangle|}{ 2^{j(\alpha+1)}}\nonumber\\
    &= \sup_{1\leq l\leq 3, j\leq J^*, n\in\Omega_j} \frac{|m_{j,n}||\langle  x, \psi^l_{j,n}\rangle|}{ 2^{j(\alpha+1)}}.\nonumber
\end{align}
Without loss of generality we can assume $\sup_{j,n} |m_{j,n}|\leq 1$, otherwise we would just add a constant factor to the analysis. We obtain
\begin{align}
    &\sup_{1\leq l\leq 3, j\leq J^*, n\in\Omega_j} \frac{|\langle  \hat x, \psi^l_{j,n}\rangle|}{ 2^{j(\alpha+1)}}\\
    &\leq\sup_{1\leq l\leq 3, j\leq J^*, n\in\Omega_j} \frac{|\langle  x, \psi^l_{j,n}\rangle|}{ 2^{j(\alpha+1)}}\\
    &=\sup_{1\leq l\leq 3, j\leq J^*, n\in\Omega_j}\frac{|\langle \tilde x, \psi^l_{j,n}\rangle|}{ 2^{j(\alpha+1)}},
\end{align}
where we used property (\ref{eq: supp of wavelets is subset}) for the last equality. We further upper bound the expression by taking the supremum over a larger set of indices:
\begin{align}
    & \sup_{1\leq l\leq 3, j\leq J^*, n\in\Omega_j}\frac{|\langle \tilde x, \psi^l_{j,n}\rangle|}{ 2^{j(\alpha+1)}}\\
    &\leq  \sup_{1\leq l\leq 3, j\leq J^*, 2^jn\in[0,1)^2}\frac{|\langle \tilde x, \psi^l_{j,n}\rangle|}{ 2^{j(\alpha+1)}}\leq B\|\tilde x\|_{\tilde C^\alpha},
\end{align}
where we used the upper bound on the wavelet decay from (\ref{eq: wavelet decay upper bound tilde x}) for the last inequality. Overall, we showed
\begin{align}
    \sup_{1\leq l\leq 3, j\leq J^*, n\in\Omega_j} \frac{|\langle  \hat x, \psi^l_{j,n}\rangle|}{ 2^{j(\alpha+1)}} \leq B\|\tilde x\|_{\tilde C^\alpha}. \label{upper bound on subset of wavelet coeffs}
\end{align}
Next, choose $a^\prime\coloneqq a/2$ and consider the smaller square $S_{a^\prime}(t^*)\subset S_a(t^*)$ of side length $a^\prime$ which is centered at $t^*$.
There exists a sufficiently small scale $J_0\in\Z$ so that wavelets with scale parameter $j\leq J_0$ whose support intersects $ S_{a^\prime}(t^*)$ must be contained in $S_a(t^*)$. Namely, for all $j\leq J_0$ and $n\in\Z^2$, we have
\begin{equation}
 \supp \psi_{j,n}\cap S_{a^\prime}(t^*)\neq \emptyset\Longrightarrow \supp \psi_{j,n} \subset S_a(t^*).\label{intersection implies contained property}
\end{equation}
Next, we project $\hat x$ to have only scales smaller than $2^{J_0}$ with the projection operator $P_{J_0}$:
\begin{align}
    P_{J_0}\hat x\coloneqq \sum_{j\leq J_0}\sum_{2^jn\in[0,1)^2}\langle \hat x, \psi^l_{j,n} \rangle \psi^l_{j,n}.
\end{align} 
 We show next that $P_{J_0}\hat x$ is uniformly Lipschitz $\alpha\geq 1$ on $S_{a^\prime}(t^*)$. 
 We  have, for every $J\in\mathbb{Z}$,
\begin{align}
    &\sup_{1\leq l\leq3, j\leq J, 2^jn\in S_{a^\prime}(t^*)} \frac{|\langle  P_{J_0}\hat x, \psi^l_{j,n}\rangle|}{ 2^{j(\alpha+1)}} \label{inequalityPJ01}\\
     &\leq \sup_{1\leq l\leq 3, j\leq J_0,n\in\Omega_j} \frac{|\langle  P_{J_0}\hat x, \psi^l_{j,n}\rangle|}{ 2^{j(\alpha+1)}}\label{inequalityPJ02} \\
     & =\sup_{1\leq l\leq 3, j\leq J_0, n\in\Omega_j} \frac{|\langle  \hat x, \psi^l_{j,n}\rangle|}{ 2^{j(\alpha+1)}}\label{inequalityPJ03}\\
     &\leq B\|\tilde x\|_{\tilde C^\alpha}\label{inequalityPJ04},
\end{align} 
where we used in the equality (\ref{inequalityPJ02}) that the wavelet coefficients of $P_{J_0}\hat x$ for scale parameters $j> J_0$ are zero and equation (\ref{intersection implies contained property}). In equality  (\ref{inequalityPJ03}), we used that $\langle P_{J_0}\hat x, \psi_{j,n}^l \rangle=\langle \hat x, \psi_{j,n}^l \rangle$ for all $j\leq J_0$, and for the last inequality  (\ref{inequalityPJ04}) we used the inequality in (\ref{upper bound on subset of wavelet coeffs}) where $J^*$ can be chosen as $J_0$. 
We can apply now Corollary \ref{corollary to mallats theorem} to $P_{J_0}\hat x$,  which shows that $P_{J_0}\hat x$ is uniformly Lipschitz $\alpha\geq 1$ on the domain $S_{a^\prime}(t^*)$. 
The Lipschitz $\alpha$ property is determined by the asymptotics of the wavelet coefficients for scales going to $0$. Therefore, if the projection $P_{J_0}\hat x$ is uniformly Lipschitz $\alpha$ on the domain $S_{a^\prime}(t^*)$ then so is $\hat x$ uniformly Lipschitz $\alpha$ on the domain $S_{a^\prime}(t^*)$. Finally, we show that $t^*$ is a regular point of $\hat x$. 
We take a sufficiently small scaling factor $a''$ with $0<a''<a^\prime$ so that the cutoff function
\begin{align*}
    (t_1,t_2)\mapsto h((t_1-t^*_1)/a'', (t_2-t_2^*)/a'')
\end{align*}
has support contained in $S_{\alpha^\prime}(t^*)$. The localized image
\begin{align}
     h\big((t_1-t^*_1)/a'', (t_2-t_2^*)/a''\big)\cdot \hat x(t_1,t_2)
\end{align}
is then a product of a uniformly Lipschitz $\alpha$ image with a smooth cut-off function, and is hence a uniformly Lipschitz $\alpha$ function with regularity $\geq\alpha$. 
Hence, $t^*$ is a regular point of $\hat x$, which finishes the proof.

\end{proof}

\section{Experiments}\label{Appendix Experiments}
In this section, we give the supplementary material for our experiments in Section \ref{sec:experiments}.
\subsection{Implementation Details}\label{Appendix: implementation details}
We implemented our methods and experiments in PyTorch \cite{Paszke2019PyTorchAI} and describe the details for each method in the following.
\subsubsection*{ShearletX}
Our implementation of the shearlet transform is an adaptation of the python library  pyShearLab2D\footnote{\url{https://na.math.uni-goettingen.de/pyshearlab/pyShearLab2D.m.html}} to PyTorch. The digital shearlet coefficients of a $3\times256\times256$ image are returned as a $49\times3\times256\times256$  tensor where the first 49 channels capture the discretely sampled scale and shearing parameters of the shearlet transform. To optimize the shearlet mask on the  $49\times3\times256\times256$ tensor, we use the Adam optimizer \cite{adam} with learning rate $10^{-1}$ and for the other Adam parameters we use the PyTorch default setting. The mask is optimized for 300 steps. The expectation in the ShearletX optimization objective 
\begin{align}
    \max_{m} \;&\mathop{\mathbb{E}}_{u\sim \nu}\Big[\Phi_c(\mathcal{DSH}^{-1}(m\odot \mathcal{DSH}(x) + (1-m)\odot u))\Big] \nonumber\\
    \;&- \lambda_1 \|m\|_1 - \lambda_2  \|\mathcal{DSH}^{-1}(m\odot \mathcal{DSH}(x))\|_1, \nonumber
\end{align}
is approximated with a simple Monte Carlo average over 16 samples from $\nu$, which samples uniform noise adapted to each scale and shearing parameter. More precisely, the perturbation for scale $a$ and shearing $s$ is sampled uniformly from $[\mu_{a,s}-\sigma_{a,s}, \mu_{a,s}+\sigma_{a,s}]$ where $\sigma_{a,s}$ and $\mu_{a,s}$ are the empirical standard deviation and mean of the image's shearlet coefficients at scale $a$ and shearing $s$. The mask is initialized with all ones as in \cite{kolek2022cartoon}. For the hyperparameters $\lambda_1$ and $\lambda_2$, we found $\lambda_1=1$ and $\lambda_2=2$ to work well in practice but many other combinations are possible if one desires more or less sparse explanations.  

\subsubsection*{WaveletX}
The discrete wavelet transform (DWT) returns approximation coefficients and detail coefficients. The detail coefficients are parametrized by scale and by orientation (vertical, horizontal, and diagonal). The number of DWT coefficients is the same as the number of pixels and WaveletX optimizes a mask on the DWT coefficients. For the implementation of the DWT, we use the PyTorch Wavelets package\footnote{\url{https://pytorch-wavelets.readthedocs.io/en/latest/readme.html}}. The mask on the DWT coefficients is optimized with the Adam optimizer \cite{adam} with learning rate $10^{-1}$ and for the other Adam parameters we use the PyTorch default setting.  The mask is optimized for 300 steps. The expectation in the WaveletX optimization objective 
\begin{align}
    \max_{m} \;&\mathop{\mathbb{E}}_{u\sim \nu}\Big[\Phi_c(\mathcal{DWT}^{-1}(m\odot \mathcal{DWT}(x) + (1-m)\odot u))\Big] \nonumber\\
    \;&- \lambda_1 \|m\|_1 - \lambda_2  \|\mathcal{DWT}^{-1}(m\odot \mathcal{DWT}(x))\|_1,\nonumber
\end{align}
approximated with a simple Monte Carlo average over 16 samples from $\nu$, which samples uniform noise adapted to each scale of the wavelet coefficients, analogous to ShearletX. More precisely, the perturbation for scale $a$ is sampled uniformly from $[\mu_{a}-\sigma_{a}, \mu_{a}+\sigma_{a}]$ where $\sigma_{a}$ and $\mu_{a}$ are the empirical standard deviation and mean of the image's wavelet coefficients at scale $a$. The mask is initialized with all ones as in ShearletX and in \cite{kolek2022cartoon}. For the hyperparameters, $\lambda_1$ and $\lambda_2$ we found $\lambda_1=1$ and $\lambda_2=10$ work well in practice but many other combinations are possible if one desires more or less sparse explanations.  

\subsubsection*{CartoonX}
For the examples in the CartoonX method from \cite{kolek2022cartoon}, we used the same parameters and procedure as WaveletX but set $\lambda_2=0$. This is because CartoonX and WaveletX only differ in the new spatial penalty that is controlled by $\lambda_2$. 
\subsubsection*{Smooth Pixel Mask}
For the smooth pixel mask method by Fong et al. \cite{Fong_2019_ICCV}, we use the TorchRay\footnote{]https://github.com/facebookresearch/TorchRay} library, which was written by Fong et al. \cite{Fong_2019_ICCV}. The only hyperaparameter for smooth pixel masks is the area constraint, where we use only the values 20\%, 10\%, or 5\%, as did  Fong et al. \cite{Fong_2019_ICCV}.
\subsubsection*{Pixel Mask without Smoothness Constraints} 
The pixel mask method without smoothness constraints has the following optimization objective:
\begin{equation}
    \max_{m\in[0,1]}\; \mathop{\mathbb{E}}_{u\sim\nu} \Big[\Phi_c(x\odot m + (1-m)\odot u)\Big] - \lambda\cdot\|m\|_1,\nonumber
\end{equation}
The mask $m$ on the pixel coefficients is optimized with  the Adam optimizer \cite{adam} with learning rate $10^{-1}$ and for the other Adam parameters we use the PyTorch default setting.  The mask is optimized for 300 steps. The expectation in the optimization objective is approximated with a simple Monte Carlo average over 16 samples from $\nu$, which is chosen as uniform noise
from $[-\sigma + \mu, \mu+\sigma]$, where $\mu$ and $\sigma$ are the empirical mean an standard deviation of the pixel values of the image, as in \cite{kolek2022cartoon}.

\subsubsection*{Edge Detector}
For the edge detector, we use a shearlet-based edge detector, introduced by Reisenhofer et al. in \cite{reisenhofer2019edge} and adapt the imlementation (PyCoShREM\footnote{\url{https://github.com/rgcda/PyCoShREM}} library)  by Reisenhofer et al. in \cite{reisenhofer2019edge} to PyTorch. We used the shearlet-based edge detector because it was able to extract edges more reliably than a Canny edge detector \cite{canny1986computational} and is mathematically well-founded.

\subsection{Runtime}\label{appendix: runtime}
In Table \ref{tab:runtime}, we compare the runtime of ShearletX and WaveletX for the ImageNet classifier MobilenetV3Small \cite{mobilenetv3}   to (1) smooth pixel mask \cite{Fong_2019_ICCV}, (2) pixel attribution methods, such as, Guided Backprop \cite{guided_backprop_2015}, Integrated Gradients \cite{Integrated_gradient_2017_sundararajan}, and Grad-CAM \cite{Selvaraju2019GradCAMVE}, and (3) LIME \cite{Ribeiro2016WhySI}. All mask explanations, \ie, smooth pixel masks, WaveletX, and ShearletX, are much slower than the pixel attribution methods, which only use a single backward pass for the explanation.  ShearletX is roughly $5\times$ slower than smooth pixel masks and WaveletX,  because the shearlet mask has more entries than pixels or wavelet coefficients and the shearlet transform involves more computations. In the future, we are eager to significantly speed-up ShearletX by optimizing our implementation, using a less redundant shearlet system to reduce the numbner of coefficients of the mask, and exploring better initialization strategies for the shearlet mask to obtain faster convergence.
For instance, we hope to train a neural network in the future that outputs mask initializations for ShearletX that lead to faster convergence.

\begin{table}[h]
  \centering
  \begin{tabular}{@{}lc@{}}
    \toprule
    Method & Time \\
    \midrule
    Integrated Gradients  \cite{Integrated_gradient_2017_sundararajan} & 0.31s\\
    Guided Backprop \cite{guided_backprop_2015} & 0.13s\\
    Grad-CAM \cite{Selvaraju2019GradCAMVE} & 0.13s\\
    LIME \cite{Ribeiro2016WhySI} & 5.22s\\
    Smooth Mask \cite{Fong_2019_ICCV} & 11.61s\\
    WaveletX (ours) & 7.99s\\
    ShearletX (ours) & 54.26s\\
    \bottomrule
    \\
  \end{tabular}
  \caption{Computation time for explanation of MobilenetV3Small \cite{mobilenetv3} decision on ImageNet \cite{deng2009imagenet}. It is well-known that mask explanations are more computationally expensive than pixel attribution methods, such as Integrated Gradients \cite{Integrated_gradient_2017_sundararajan}, Grad-CAM \cite{Selvaraju2019GradCAMVE}, and Guided Backprop \cite{guided_backprop_2015}. ShearletX is slower than WaveletX and Smooth Pixel Masks \cite{Fong_2019_ICCV} due to the mask on the shearlet representation being larger and shearlets involving more computations.}
  \label{tab:runtime}
\end{table}

\subsection{Scatter Plots}\label{appendix: scatter plots}
The scatter plots in Figure \ref{fig:scatter-plot} in the main paper compares the hallucination score and conciseness-preciseness score between ShearletX, WaveletX, smooth pixel masks by Fong et al. \cite{Fong_2019_ICCV}, and pixel masks without smoothness constraints. In this section, we provide evidence that our results from Figure \ref{fig:scatter-plot} are consistent across different area constraints for smooth pixel masks and across different classifiers. 
In Figure \ref{fig:scatter plots supp resnet18}, we show the scatter plots for Resnet18 \cite{He2016DeepRL} for the area constrains $5\%$, $10\%$, and $20\%$.  Figure \ref{fig:scatter plots supp mobilenet} shows the same plots for a MobilenetV3Small \cite{mobilenetv3} network.

\subsection{Quantitative Comparison}\label{appendix: quantitative comparison}
Pixel attribution methods, such as Integrated Gradients \cite{Integrated_gradient_2017_sundararajan}, Guided Backprop \cite{guided_backprop_2015}, and Grad-CAM \cite{Selvaraju2019GradCAMVE}, are commonly compared by insertion and deletion curves \cite{rise2018,Samek2017EvaluatingTV, Arras2017ExplainingRN, kolek2022cartoon}, which gradually insert/delete the most relevant pixels and observe the change in class probability. A good insertion curve exhibits a rapid initial increase in class probability and large area under the curve. A good deletion curve exhibits a rapid initial decay and small area under the curve. 
Comparing ShearletX on insertion and deletion curves poses two challenges: (1) ShearletX is given by a mask that is  defined in shearlet space and not in pixel space, as in other methods. (2) ShearletX does not give a proper ordering for the relevance of coefficients due to the binary nature of the mask.

In Figure \ref{fig:perturbation curves in representation}, we compare insertion and deletion curves for ShearletX, where we perturb the most relevant coefficients for ShearletX in \emph{shearlet space}. Insertion and deletion curves are averaged over 50 random ImageNet validation samples and compared on MobilenetV3Small \cite{mobilenetv3}, ResNet18 \cite{He2016DeepRL}, and VGG16 \cite{vgg19}.
ShearletX performs best among compared methods on the \emph{initial part} of the insertion curves in Figure \ref{fig:perturbation curves in representation}, exhibiting a rapid initial increase in probability score. This is what ShearletX was optimized for: keeping very few coefficients that retain the classification decision. However, once ShearletX achieves its peak, coefficients are inserted that were probably marked as zero and not further ordered by the shearlet mask and therefore inserted in arbitrary order. Consequently, the probability score collapses after the peak. Similar behavior is observed for the deletion curve, which initially decays rapidly for ShearletX and then slows down due to the lacking ordering of unselected coefficients. The hyperparameters for ShearletX in the experiments of Figure \ref{fig:perturbation curves in representation} are ($\lambda_1=1$ and $\lambda_2=2$).

In Figure \ref{fig:perturbation curves in pixel space}, we also experiment with a pixel ordering for ShearletX by ordering pixels simply by their magnitude in the ShearletX explanation. Surprisingly, this ordering beats all other compared methods on the insertion curves on two out of three  classifiers that we evaluate. The deletion curves for the pixel ordering of ShearletX are competitive but not outperforming the other methods. For the pixel ordering of ShearletX, we used smaller sparsity parameters ($\lambda_1=0.5$ and $\lambda_2=0.5$) to avoid having too many deleted pixels in ShearletX that cannot be ordered uniquely.

\begin{figure*}[t]
     \centering
     \begin{subfigure}[b]{0.33\textwidth}
    \centering
    \includegraphics[width=.9\linewidth]{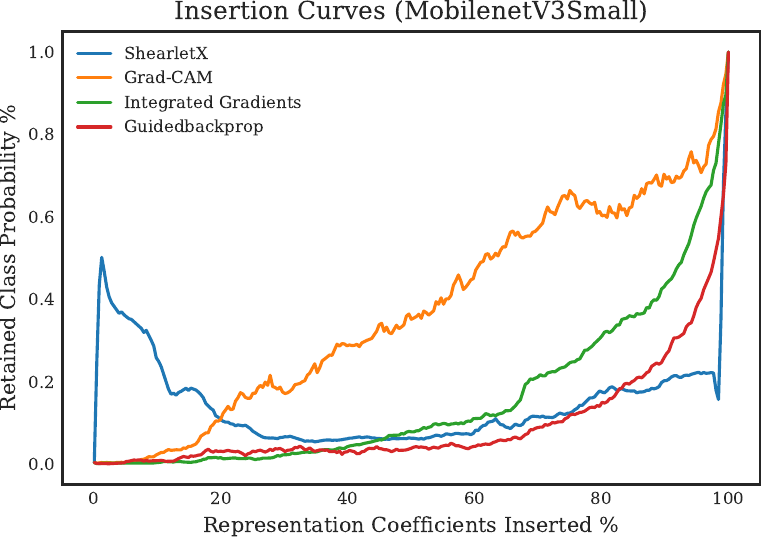}
    \label{fig:my_label}
     \end{subfigure}
     \hfill
     \begin{subfigure}[b]{0.33\textwidth}
    \centering
    \includegraphics[width=.9\linewidth]{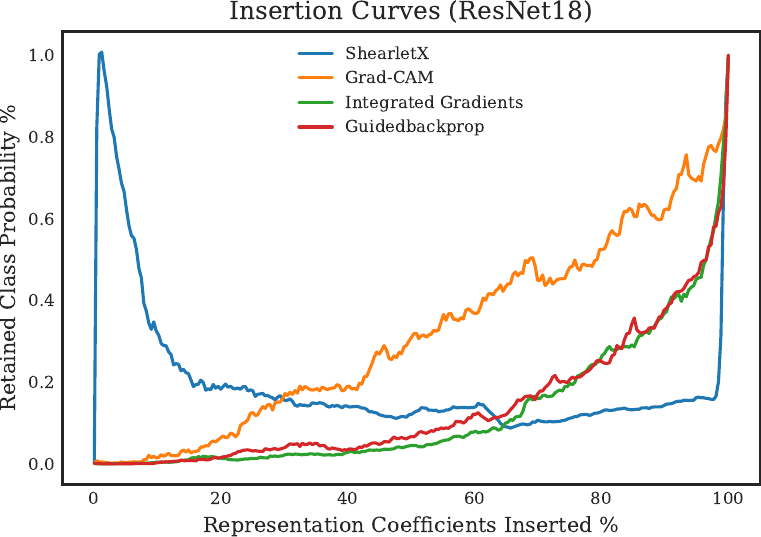}
    \label{fig:my_label}
     \end{subfigure}
     \hfill
    \begin{subfigure}[b]{0.33\textwidth}
    \centering
    \includegraphics[width=.9\linewidth]{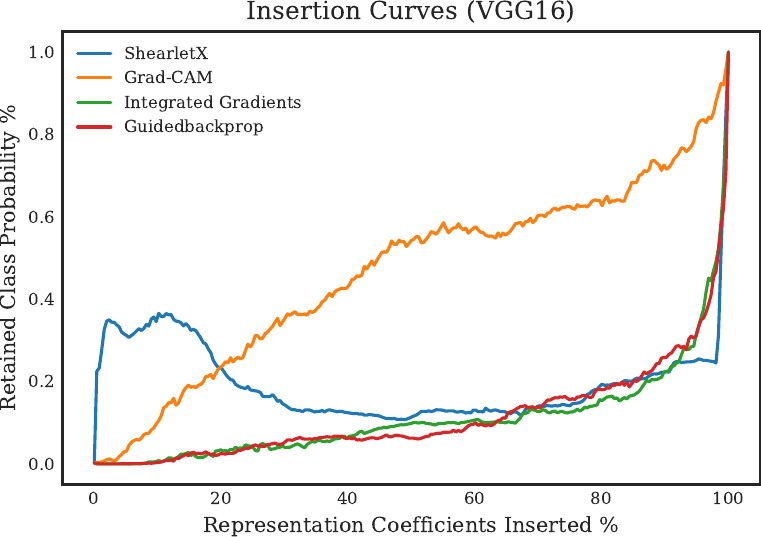}
    \label{fig:my_label}
     \end{subfigure}
     \par\bigskip
      \hfill
    \begin{subfigure}[b]{0.33\textwidth}
    \centering
    \includegraphics[width=.9\linewidth]{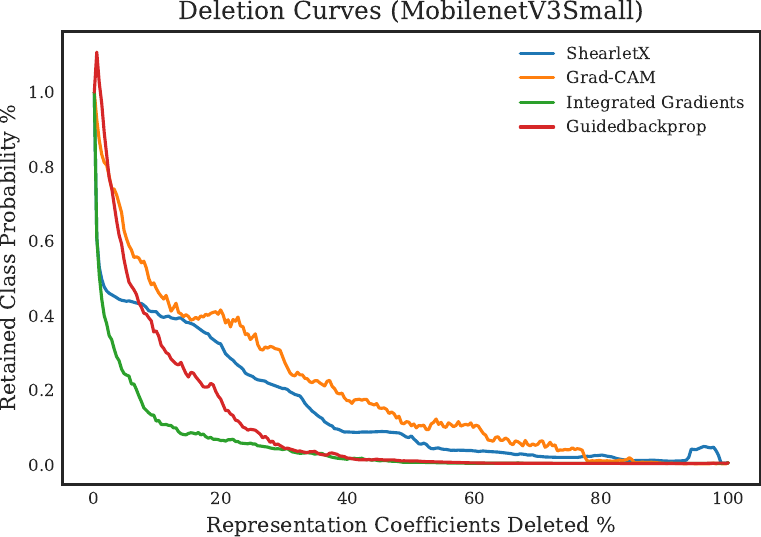}
    \label{fig:my_label}
     \end{subfigure}
     \hfill
     \begin{subfigure}[b]{0.33\textwidth}
    \centering
    \includegraphics[width=.9\linewidth]{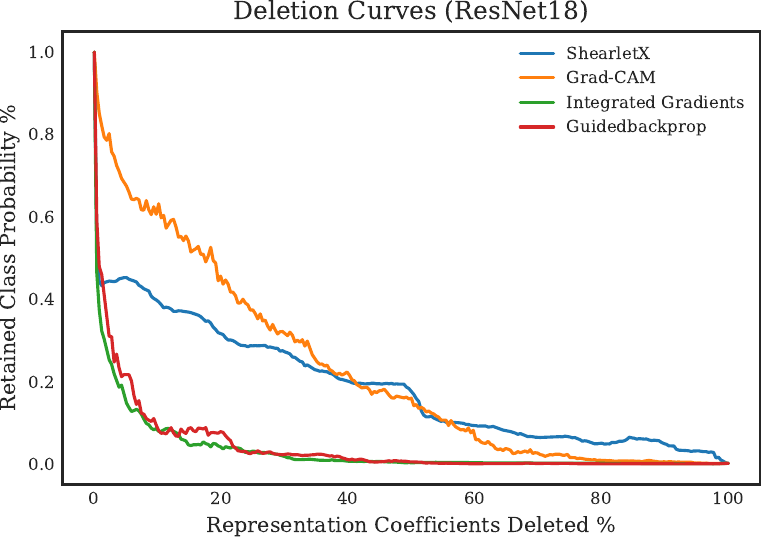}
    \label{fig:my_label}
     \end{subfigure}
     \hfill
     \begin{subfigure}[b]{0.33\textwidth}
    \centering
    \includegraphics[width=.9\linewidth]{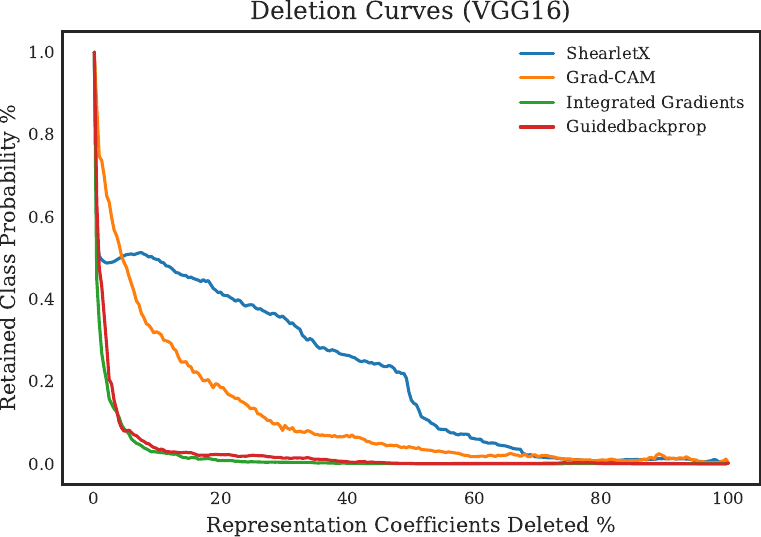}
    \label{fig:my_label}
     \end{subfigure}
        \caption{Insertion and deletion curves for ShearletX  and popular pixel attribution methods (Integrated Gradients \cite{Integrated_gradient_2017_sundararajan}, Grad-CAM \cite{Selvaraju2019GradCAMVE}, and Guided Backprop \cite{guided_backprop_2015}), where representation coefficients are flipped and set to zero. 
        For ShearletX, the representation coefficients are the shearlet coefficients and for all other methods, the pixel coefficients. Insertion curves plot the percentage of inserted representation coefficients (the  most relevant coefficients first) against the retained class probability (class probability after perturbing divided by original class probability). Deletion curves plot the percentage of deleted representation coefficients (the most relevant coefficients first) against the retained class probability. First row: Insertion curves for  MobilenetV3SMall \cite{mobilenetv3}, ResNet18 \cite{He2016DeepRL}, and VGG16 \cite{vgg19}. Second row: Deletion curves for  MobilenetV3SMall \cite{mobilenetv3}, ResNet18 \cite{He2016DeepRL}, and VGG16 \cite{vgg19}.}
        \label{fig:perturbation curves in representation}

\end{figure*}

\begin{figure*}[t]
     \centering
     \begin{subfigure}[b]{0.33\textwidth}
    \centering
    \includegraphics[width=.9\linewidth]{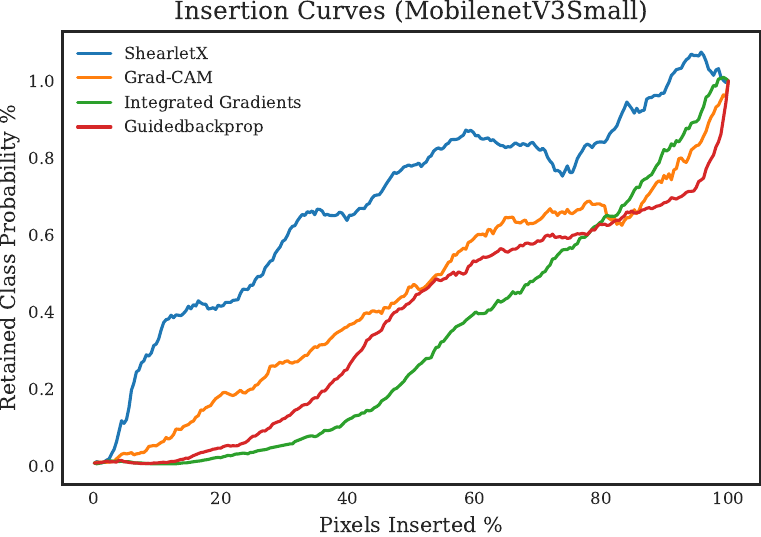}
    \label{fig:my_label}
     \end{subfigure}
     \hfill
     \begin{subfigure}[b]{0.33\textwidth}
    \centering
    \includegraphics[width=.9\linewidth]{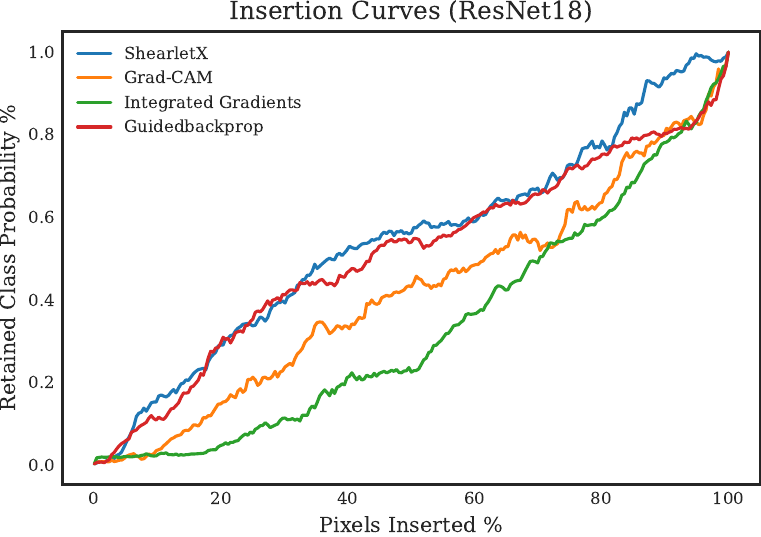}
    \label{fig:my_label}
     \end{subfigure}
     \hfill
    \begin{subfigure}[b]{0.33\textwidth}
    \centering
    \includegraphics[width=.9\linewidth]{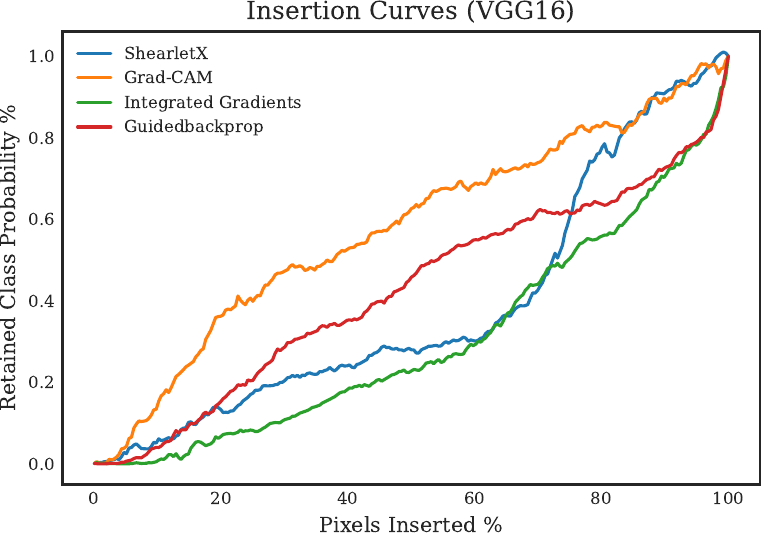}
    \label{fig:my_label}
     \end{subfigure}
     \par\bigskip
      \hfill
    \begin{subfigure}[b]{0.33\textwidth}
    \centering
    \includegraphics[width=.9\linewidth]{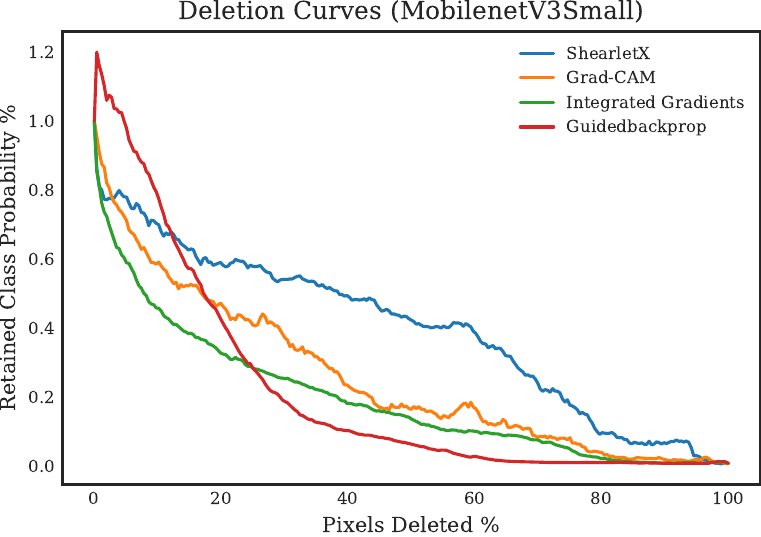}
    \label{fig:my_label}
     \end{subfigure}
     \hfill
     \begin{subfigure}[b]{0.33\textwidth}
    \centering
    \includegraphics[width=.9\linewidth]{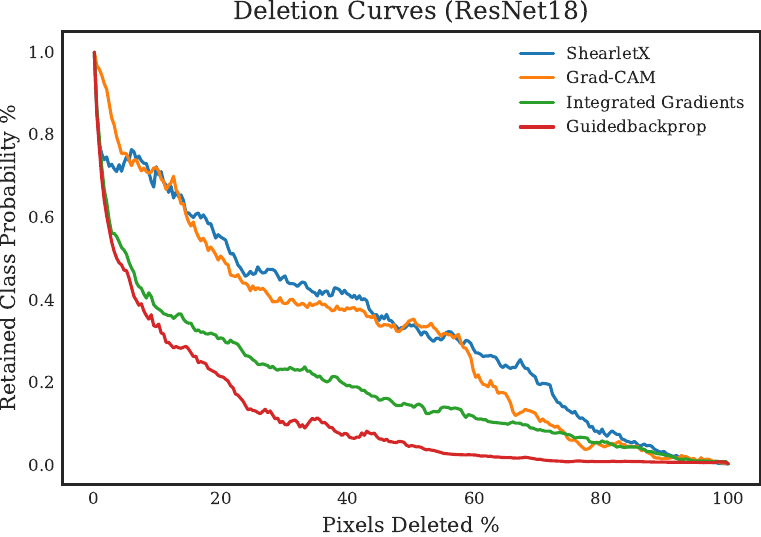}
    \label{fig:my_label}
     \end{subfigure}
     \hfill
     \begin{subfigure}[b]{0.33\textwidth}
    \centering
    \includegraphics[width=.9\linewidth]{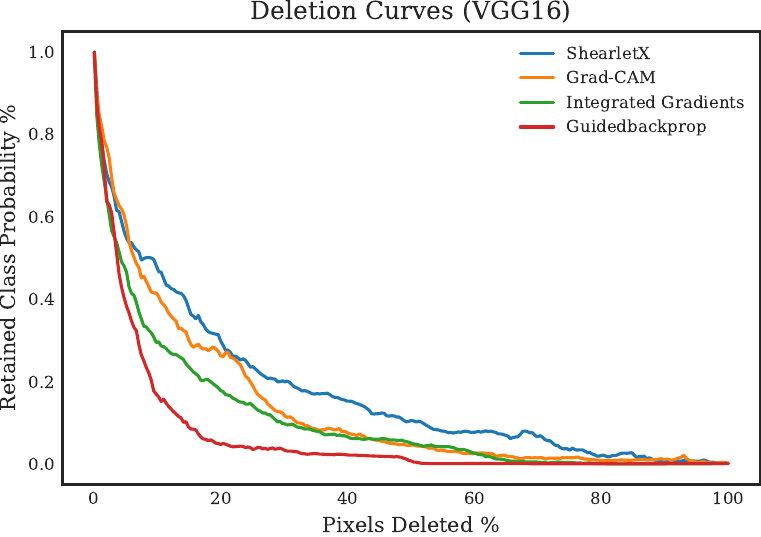}
    \label{fig:my_label}
     \end{subfigure}
        \caption{Insertion and deletion curves for ShearletX  and popular pixel attribution methods (Integrated Gradients \cite{Integrated_gradient_2017_sundararajan}, Grad-CAM \cite{Selvaraju2019GradCAMVE}, and Guided Backprop \cite{guided_backprop_2015}). 
        For ShearletX, we sort pixels by magnitude in the explanation. Insertion curves plot the percentage of inserted pixels  (the most relevant pixels first) against the retained class probability (class probability after perturbing divided by original class probability).
        Deletion curves plot the percentage of deleted pixels  (the most relevant pixels first) against the retained class probability.
        Deleted pixels are replaced with blurred pixel values. First row: Insertion curves for  MobilenetV3SMall \cite{mobilenetv3}, ResNet18 \cite{He2016DeepRL}, and VGG16 \cite{vgg19}. Second row: Deletion curves for  MobilenetV3SMall \cite{mobilenetv3}, ResNet18 \cite{He2016DeepRL}, and VGG16 \cite{vgg19}.}
        \label{fig:perturbation curves in pixel space}

\end{figure*}

\begin{figure*}[h]
     \centering
     \begin{subfigure}[b]{0.33\textwidth}
          \centering
    \includegraphics[width=.9\linewidth]{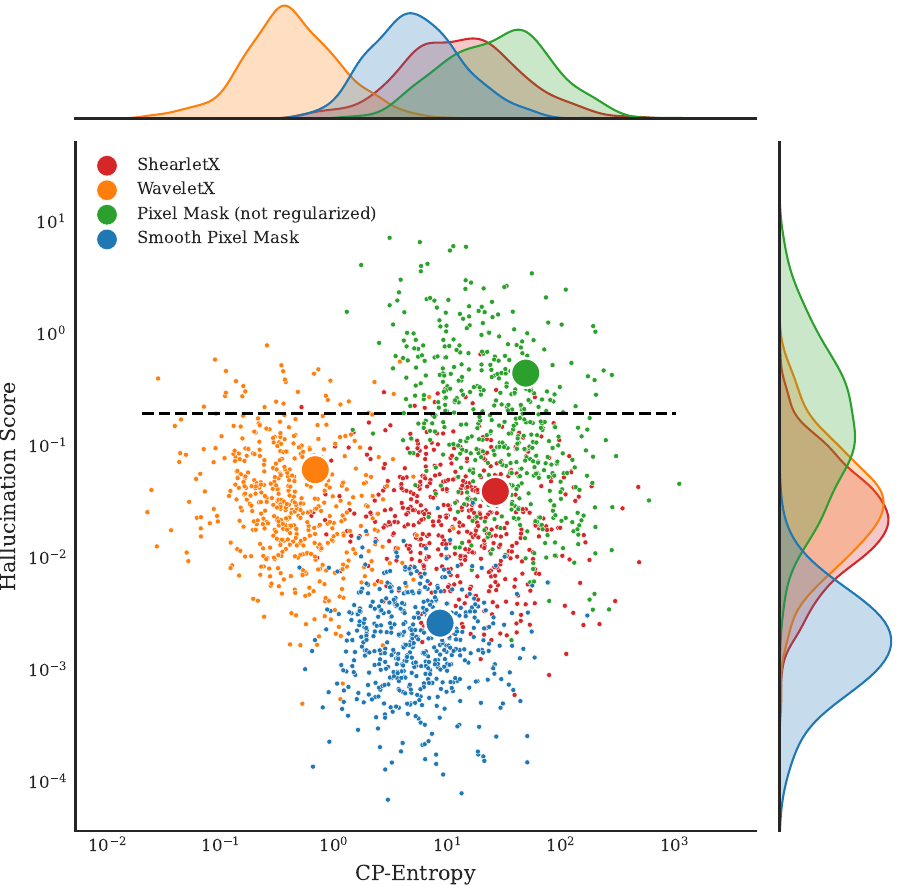}
     \caption{Smooth Pixel Mask Area Constraint: $5\%$}
         \label{fig:y equals x}
     \end{subfigure}
     \hfill
     \begin{subfigure}[b]{0.33\textwidth}
         \centering
    \includegraphics[width=.9\linewidth]{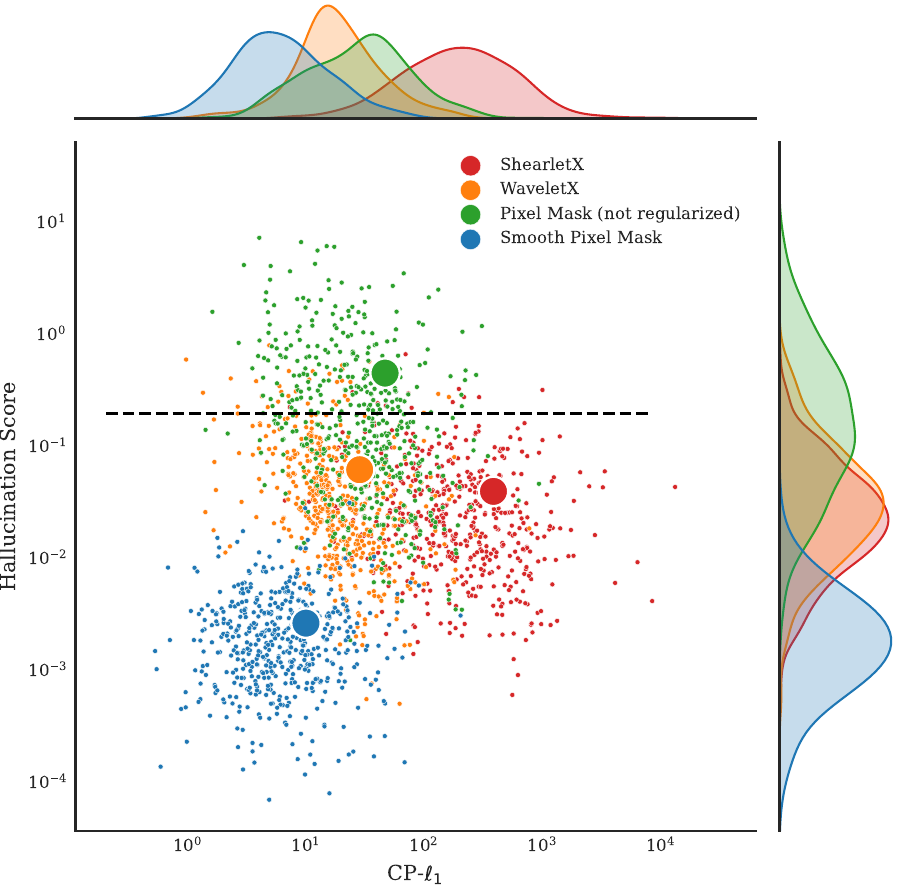}
     \caption{Smooth Pixel Mask Area Constraint: $5\%$}
         \label{fig:three sin x}
     \end{subfigure}
     \begin{subfigure}[b]{0.33\textwidth}
          \centering
    \includegraphics[width=.9\linewidth]{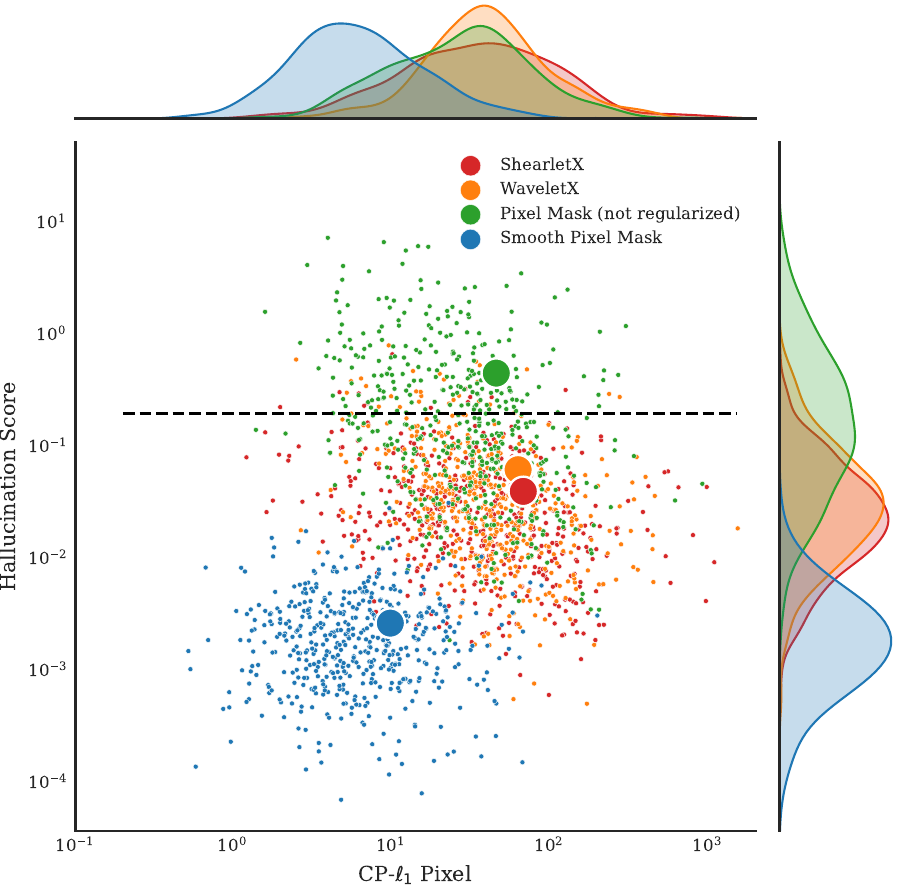}
      \caption{Smooth Pixel Mask Area Constraint: $5\%$}
         \label{fig:y equals x}
     \end{subfigure}
     \hfill
     \begin{subfigure}[b]{0.33\textwidth}
         \centering
    \includegraphics[width=.9\linewidth]{Figures/scatterplot_resnet18_info_as_entropy_in_representation_area_0.1.pdf}
     \caption{Smooth Pixel Mask Area Constraint: $10\%$}
         \label{fig:three sin x}
     \end{subfigure}
          \hfill
     \begin{subfigure}[b]{0.33\textwidth}
         \centering
    \includegraphics[width=.9\linewidth]{Figures/scatterplot_resnet18_info_as_l1_in_representation_area_0.1.pdf}
     \caption{Smooth Pixel Mask Area Constraint: $10\%$}
         \label{fig:three sin x}
     \end{subfigure}
          \hfill
     \begin{subfigure}[b]{0.33\textwidth}
         \centering
    \includegraphics[width=.9\linewidth]{Figures/scatterplot_resnet18_info_as_l1_spatial_area_0.1.pdf}
         \caption{Smooth Pixel Mask Area Constraint: $10\%$}
         \label{fig:three sin x}
     \end{subfigure}
          \hfill
     \begin{subfigure}[b]{0.33\textwidth}
         \centering
    \includegraphics[width=.9\linewidth]{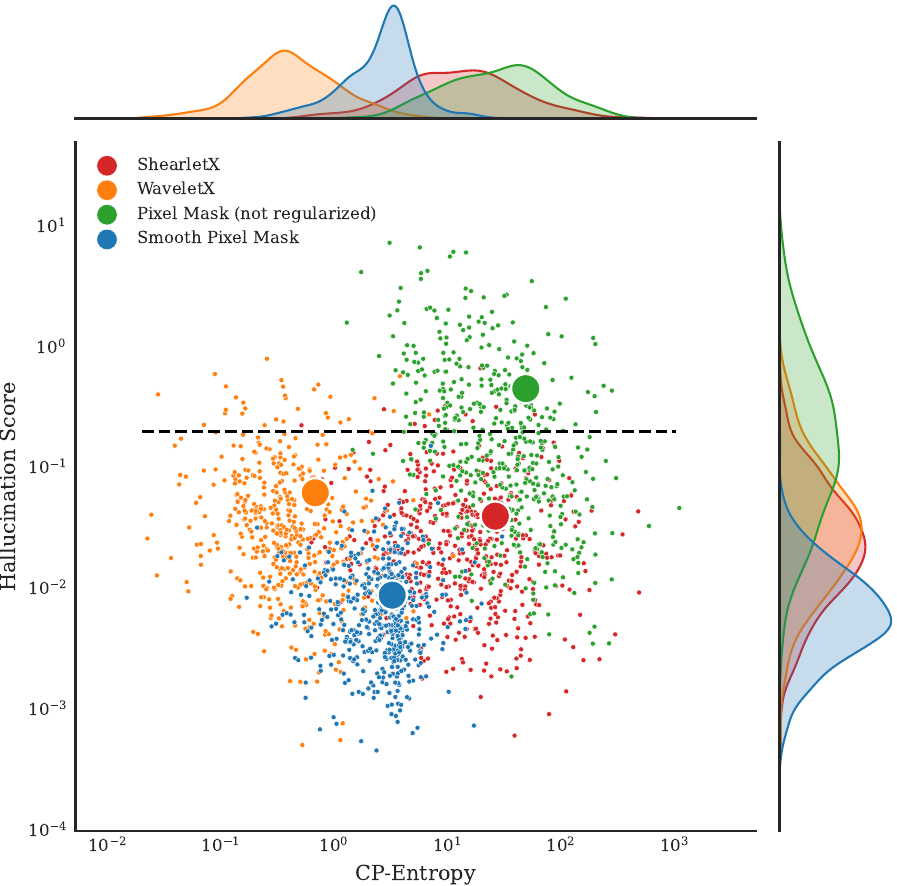}
     \caption{Smooth Pixel Mask Area Constraint: $20\%$}
         \label{fig:three sin x}
     \end{subfigure}
          \hfill
     \begin{subfigure}[b]{0.33\textwidth}
         \centering
    \includegraphics[width=.9\linewidth]{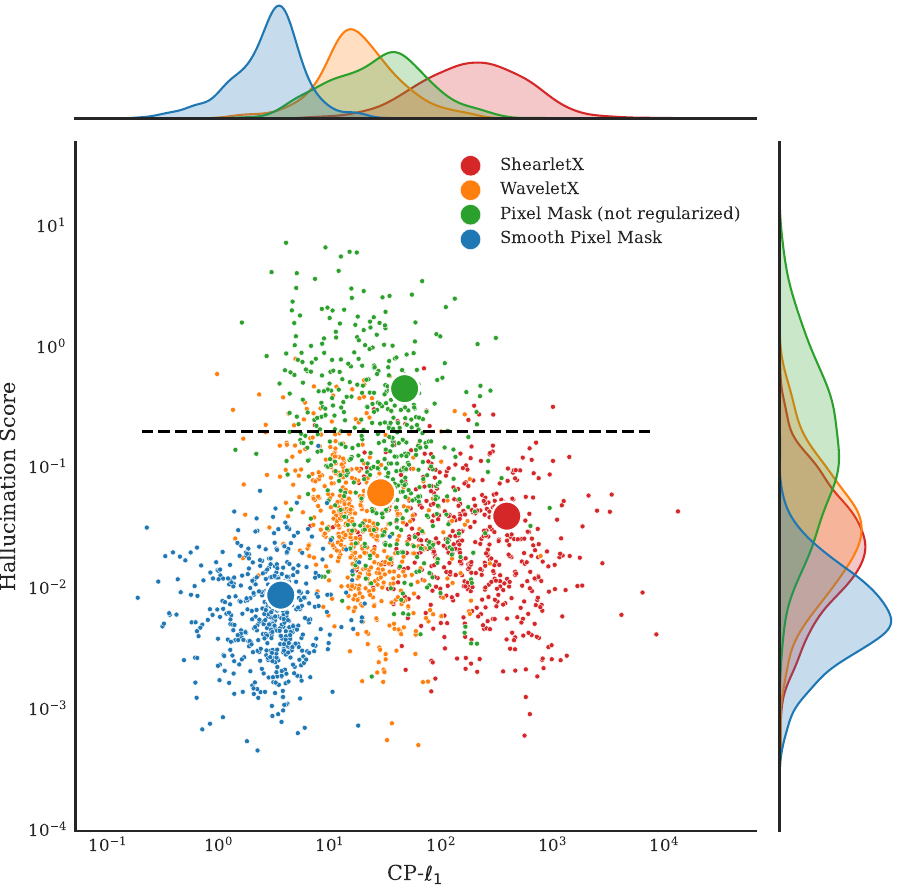}
     \caption{Smooth Pixel Mask Area Constraint: $20\%$}
         \label{fig:three sin x}
     \end{subfigure}
          \hfill
     \begin{subfigure}[b]{0.33\textwidth}
         \centering
    \includegraphics[width=.9\linewidth]{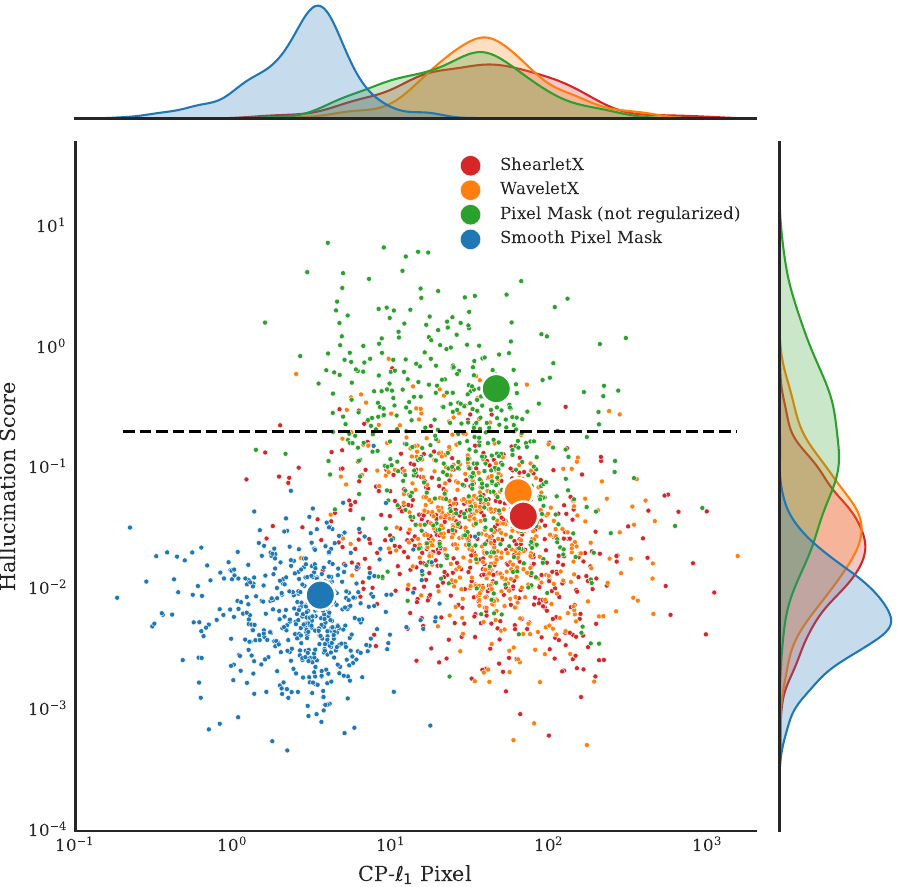}
         \caption{Smooth Pixel Mask Area Constraint: $20\%$}
         \label{fig:three sin x}
     \end{subfigure}

        \caption{Scatter plots of hallucinaton score (lower is better) and conciseness-preciseness score (higher is better) for ShearletX, WaveletX, smooth pixel masks \cite{Fong_2019_ICCV}, and pixel mask without smoothness constraints. We used the classifier ResNet18 \cite{He2016DeepRL} for all scatter plots. First row uses smooth pixel masks \cite{Fong_2019_ICCV} with area constraint $5\%$, second row uses $10\%$, and last row uses $20\%$. The scatter plots shows that the advantage of ShearletX over smooth pixel masks \cite{Fong_2019_ICCV} holds for different area constraints.}
        \label{fig:scatter plots supp resnet18}
\end{figure*}

\begin{figure*}[h]
     \centering
     \begin{subfigure}[b]{0.33\textwidth}
          \centering
    \includegraphics[width=.9\linewidth]{Figures/scatterplot_resnet18_info_as_entropy_in_representation_area_0.05.pdf}
     \caption{Smooth Pixel Mask Area Constraint: $5\%$}
         \label{fig:y equals x}
     \end{subfigure}
     \hfill
     \begin{subfigure}[b]{0.33\textwidth}
         \centering
    \includegraphics[width=.9\linewidth]{Figures/scatterplot_resnet18_info_as_l1_in_representation_area_0.05.pdf}
     \caption{Smooth Pixel Mask Area Constraint: $5\%$}
         \label{fig:three sin x}
     \end{subfigure}
     \begin{subfigure}[b]{0.33\textwidth}
          \centering
    \includegraphics[width=.9\linewidth]{Figures/scatterplot_resnet18_info_as_l1_spatial_area_0.05.pdf}
      \caption{Smooth Pixel Mask Area Constraint: $5\%$}
         \label{fig:y equals x}
     \end{subfigure}
     \hfill
     \begin{subfigure}[b]{0.33\textwidth}
         \centering
    \includegraphics[width=.9\linewidth]{Figures/scatterplot_resnet18_info_as_entropy_in_representation_area_0.1.pdf}
     \caption{Smooth Pixel Mask Area Constraint: $10\%$}
         \label{fig:three sin x}
     \end{subfigure}
          \hfill
     \begin{subfigure}[b]{0.33\textwidth}
         \centering
    \includegraphics[width=.9\linewidth]{Figures/scatterplot_resnet18_info_as_l1_in_representation_area_0.1.pdf}
     \caption{Smooth Pixel Mask Area Constraint: $10\%$}
         \label{fig:three sin x}
     \end{subfigure}
          \hfill
     \begin{subfigure}[b]{0.33\textwidth}
         \centering
    \includegraphics[width=.9\linewidth]{Figures/scatterplot_resnet18_info_as_l1_spatial_area_0.1.pdf}
         \caption{Smooth Pixel Mask Area Constraint: $10\%$}
         \label{fig:three sin x}
     \end{subfigure}
          \hfill
     \begin{subfigure}[b]{0.33\textwidth}
         \centering
    \includegraphics[width=.9\linewidth]{Figures/scatterplot_resnet18_info_as_entropy_in_representation_area_0.2.pdf}
     \caption{Smooth Pixel Mask Area Constraint: $20\%$}
         \label{fig:three sin x}
     \end{subfigure}
          \hfill
     \begin{subfigure}[b]{0.33\textwidth}
         \centering
    \includegraphics[width=.9\linewidth]{Figures/scatterplot_resnet18_info_as_l1_in_representation_area_0.2.pdf}
     \caption{Smooth Pixel Mask Area Constraint: $20\%$}
         \label{fig:three sin x}
     \end{subfigure}
          \hfill
     \begin{subfigure}[b]{0.33\textwidth}
         \centering
    \includegraphics[width=.9\linewidth]{Figures/scatterplot_resnet18_info_as_l1_spatial_area_0.2.pdf}
         \caption{Smooth Pixel Mask Area Constraint: $20\%$}
         \label{fig:three sin x}
     \end{subfigure}
     
        \caption{Scatter plot of hallucinaton score (lower is better) and conciseness-preciseness score (higher is better) for ShearletX, WaveletX, smooth pixel masks \cite{Fong_2019_ICCV}, and pixel mask without smoothness constraints. We used  MobilenetV3Small \cite{mobilenetv3} as a classifier for all scatter plots. First row uses smooth pixel masks \cite{Fong_2019_ICCV} with area constraint $5\%$, second row uses $10\%$, and last row uses $20\%$. The scatter plots shows that the advantage of ShearletX over smooth pixel masks \cite{Fong_2019_ICCV} holds for different area constraints. The scatter plots compared to Figure \ref{fig:scatter plots supp resnet18} also show that the advantage of ShearletX over smooth pixel masks \cite{Fong_2019_ICCV} holds for different classifiers.}
        \label{fig:scatter plots supp mobilenet}
\end{figure*}

\end{document}